\newcommand{\revisedC}[1]{{\color{black} #1}}
\newcommand\held[1]{{\fcolorbox{black}{white}{$#1$}}}
\newcommand\heldb[1]{{\fcolorbox{white}{lightgray}{$#1$}}}
\newcommand\heldc[1]{{\underline{#1}}}
\newcommand\hidden[1]{{}}
\newcommand{\agentSet}{\mathcal{N}}
\newcommand{\resourceSet}{\mathcal{R}}
\newcommand{\proc}{M}
\newcommand{\swapfamily}{\mathcal{M}_2}
\newcommand{\tuple}[1]{\langle #1 \rangle}
\newtheorem{observation}{Observation}
\begin{document}
	
\title{Swap Dynamics in Single-Peaked Housing Markets}

\author{Aur{\'e}lie Beynier \and Nicolas Maudet \and \\ Simon Rey \and Parham Shams}
\authorrunning{  }

\institute{ A. Beynier, N. Maudet and P. Shams \at Sorbonne Universit{\'e}, CNRS,\\
	Laboratoire d'Informatique de Paris 6, LIP6,\\
	F-75005 Paris, France\\
	\email{\{aurelie.beynier, nicolas.maudet, parham.shams\}@lip6.fr}\\
	\and S. Rey 
	\at Institute for Logic, Languages and Computation (ILLC)\\
	Amsterdam, the Netherlands\\
	\email{s.j.rey@uva.nl}}

\date{Received: date / Accepted: date}

%
\maketitle              

\begin{abstract}
	This paper focuses on the problem of fairly and efficiently allocating resources to agents. 
	We consider a specific setting, usually referred to as a \emph{housing market}, where each agent must receive exactly one resource (and initially owns one). 
	 In this framework, in the domain of linear preferences, the Top Trading Cycle (TTC) algorithm  is the only procedure satisfying Pareto-optimality, individual rationality and strategy-proofness. Under the restriction of single-peaked preferences,  Crawler enjoys the same properties. These two centralized procedures might however involve long trading cycles. In this paper we focus instead on procedures involving the shortest cycles: bilateral swap-deals. In such swap dynamics, the agents perform pairwise mutually improving deals until reaching a swap-stable allocation (no improving swap-deal is possible). We prove that in the single-peaked domain every swap-stable allocation is Pareto-optimal, showing the efficiency of the swap dynamics. In fact, this domain turns out to be maximal when it comes to guaranteeing this property. Besides, both the outcome of TTC and  Crawler can always be reached by sequences of swaps. However, some Pareto-optimal allocations are not reachable through improving swap-deals. 
	We further analyze the outcome of swap dynamics through social welfare notions, in our context the average or minimum rank of the resources obtained by agents in the final allocation. We start by providing a worst-case analysis of these procedures. Finally, we present an extensive experimental study in which different versions of swap dynamics are compared to other existing allocation procedures. We show that they exhibit good results on average in this domain, under different cultures for generating synthetic data.  
\end{abstract}
	

\section{Introduction}

\label{sec:intro}

This paper studies the problem of allocating fairly and efficiently a set of indivisible resources to a set of agents.  
	We consider a specific setting, usually referred to as a \emph{housing market} \citep{shapley1974cores}, where each agent must receive exactly one resource (and initially owns one). 
	We investigate housing markets by moreover assuming that the preferences are \emph{single-peaked} \citep{arrow1951social,black1948rationale}.
	Under this assumption, there exists a common ordering of the resources such that 
	the further you go from your most preferred resource---either going left or right according to the ordering---the less preferred the resources are. 
	This domain restriction would for instance correspond to real-world situations where the preferences of agents are related to a one-dimensional space, such as the temperature in a room, the distance of a house to a facility (metro, bike rental station, supermarket), the price of a new product, the size of a cloth  \citep{SPcircle, Puppe}.  
	In voting, this restriction domain may be appropriate in political elections where  preferences usually decrease when considering candidates further away on the political spectrum from the most preferred candidate.

The design of resource allocation procedures is primarily guided by the  
the properties we want the final allocation, or the procedure itself, to satisfy.  
In single-peaked housing markets, the two prominent allocation procedures are \emph{Top Trading Cycle} (TTC) \citep{shapley1974cores} and \emph{Crawler} \citep{bade2019matching}. They both satisfy a set of key desirable properties: Pareto-optimality, strategy-proofness and individual rationality \citep{roth1982incentive,shapley1974cores,ma1994strategy,bade2019matching}. These three properties thus provide strong guarantees in terms of efficiency (Pareto-optimality) and incentive compatibility (individual rationality and strategy-proofness). In fact, Crawler has an additional advantage over TTC in terms of strategy-proofness, since it can implemented in obviously dominant strategies \citep{bade2019matching}. 

	However, both TTC and  Crawler 
	require a significant amount of global coordination. Indeed, they are both based on \emph{trading cycles}---cyclic exchanges of resources between several agents---which can potentially involve many agents. Such long cycles may not be acceptable or even feasible in practice, for instance because of the risk of failure they induce. 
	As an example, kidney exchange programs usually restrict the size of the cycles to two or three \citep{roth2005pairwise} because of time constraints.

Limiting the length of these cycles is thus a relevant agenda to reduce the coordination complexity and make procedures more robust.  
It is in particular a prerequisite to the development of more \emph{decentralized} approaches that rely on the agents autonomously performing simple deals, without being hampered by a prohibitive coordination and communication cost.

Indeed, the use of a central coordinator induces a weak point in the system: the coordinator causes a bottleneck whose default often leads to the failure of the whole allocation process. 
The system may also be inherently distributed and the use of a central coordinator may not be possible because of limitations in the communication infrastructure or because of privacy requirements. In addition, centralized procedures may be perceived as less fair by the agents who are not allowed to actively take part in the allocation process \citep{van1997judge,leventhal1980should,thibaut1975procedural}. 
While TTC and  Crawler are both presented as centralized procedures (\emph{i.e.} a central authority computes each trading cycle that should be implemented, and dictates it to the agents\footnote{In principle, it is possible to distribute the execution of central procedures, by letting all agents broadcast, compute locally their own version of the central algorithm, and coordinate. This might however induce a significant cost.}),  distributed procedures \citep{sandholm1998contract,chevaleyre2017distributed} take a different perspective. Agents autonomously negotiate over the
resources and locally agree on deals, and the outcome of the resource allocation problem results 
from the sequence of such local deals. 
Such dynamics may achieve interesting efficiency and fairness results \citep{chevaleyre2007reaching,endriss2006negotiating}. However, long cycles still pose real challenges for these procedures \citep{rosenschein1994rules} as they involve distributed coordination among numerous agents. 

In this paper, we pursue this line of research and focus on the simplest possible local deals in the context of housing markets:  \emph{swap-deals} \citep{damamme2015power}. Under a \emph{swap dynamics}, agents meet each other, in a pairwise fashion, and exchange their resources if they both benefit from it. The process iterates until a stable state, an equilibrium, is reached. 
Under this approach, once a trading cycle is selected, very little coordination is required. 
Nevertheless, selecting \emph{which} swap-deal (with which agent) to perform may still require significant prior coordination, depending on the heuristic used. Hence this approach is best described as a family of dynamics, with different degrees of decentralization depending on the heuristics used to select the improving deals to be implemented. We shall study several of them in this paper.

\paragraph{Contributions.} 

	We first establish that a large class of swap dynamics is vulnerable to manipulation, unveiling a tension between efficiency and strategy-proofness for such decentralized procedures. 
	On the positive side, we demonstrate that in housing markets with single-peaked preferences, every allocation that is stable with respect to swap-deals is Pareto-optimal. 
We also prove that the single-peaked domain is maximal in that respect: in other words, any larger domain would fail to offer such a guarantee. 
Moreover, even though some Pareto-optimal allocations may not be reached by a sequence of improving swap-deals, we show that the allocations returned by TTC and  Crawler are both reachable: there exist sequences of swap-deals simulating these procedures, such that (potentially) long trading cycles are not necessary any longer.  
We further investigate how swap dynamics behave with respect to social welfare, taken as the average or minimum rank of the resource obtained by the agents. 
After a worst-case analysis, we explore experimentally the influence of different heuristics used to select deals. These experiments highlight that swap dynamics perform particularly well with respect to these objectives.

\paragraph{Outline of the paper.} 

We first present some related works (Section \ref{sec:related}) and introduce our model and some additional definitions in Section \ref{sec:preliminaries}. Section \ref{sec:centralProcedures} describes the centralized allocation procedures discussed in this paper, while swap dynamics are introduced in Section \ref{sec:decentralProcedures}. A formal analysis of these procedures  is presented in Section \ref{sec:swapEfficiency}. Section \ref{sec:pricesOf} offers a comparison between swap dynamics outcomes and social welfare optimization, in a ``price of anarchy'' perspective. The experimental analysis is presented in Section \ref{sec:expe}. Section \ref{sec:conclu} concludes. 

\section{Related works} 
\label{sec:related}

The problem of fair and efficient allocation of indivisible goods is well studied in economics, with original developments impulsed by the computer science perspective in recent years \citep{bouveret2016fair,Aziz20,Walsh20,thomson2016introduction,moulin2018fair}.  
In the present paper we focus on the model defined by \cite{shapley1974cores}, called \emph{housing market} or \emph{assignment problem}, in which each agent must receive exactly one indivisible resource. \cite{shapley1974cores} defined the \emph{Top-Trading Cycle} (TTC) algorithm which has been  extensively studied \citep{roth1982incentive} and shown to be the only mechanism satisfying Pareto-optimality when preferences are strict \citep{ma1994strategy}. \cite{bade2019matching} explored another direction by restricting preferences to single-peaked domains. In this case she presents the \emph{Crawler} that satisfies the same properties as TTC. This work has been further explored by \citet{liu2018large}.

Following this line of work, we assume single-peaked preferences in this paper. This domain of preferences has been introduced by \cite{black1948rationale} and \cite{arrow1951social}. Initially motivated in voting contexts, it is now a well studied domain of preferences \citep{moulin1991axioms,elkind2017structured}. Numerous works have explored single-peaked preferences in the context of fair division. \cite{sprumont1991division} for instance characterized the \emph{uniform allocation rule}, the unique strategy-proof, efficient and anonymous allocation procedure with single-peaked preferences and divisible resources. \cite{kasajima2013probabilistic} investigated probabilistic allocation of indivisible resources with single-peaked preferences. More recently, \cite{hougaard2014assigning} extended this research area to indivisible resources and considered the problem of assigning agents to a line under single-peaked preferences.
	On the empirical side, it is not clear wether single-peaked preferences can be observed in real-life scenarios, as mentioned in \cite{Puppe}. On the one hand, \cite{spector2000rational}, \cite{demarzo2003persuasion} and \cite{list2013deliberation} argue that interactions between the agents lead towards single-peaked preferences. On the other hand, \cite{egan2014something} shows that political preferences might be double-peaked (and not single-peaked) when it comes to some polarizing topics.

Following the development of multi-agent systems, decentralized procedures have been defined through the idea of local exchanges between agents. 
\cite{sandholm1998contract} considered the problem of reallocating tasks among individually rational agents. \cite{aziz2016optimal} investigated different complexity problems in this setting. \cite{endriss2006negotiating} and \cite{chevaleyre2010simple} respectively characterized the class of deals and the class of preferences required to reach socially optimal allocations, while communication complexity issues were explored in \citep{endriss2005communication,dunne}. 
\cite{chevaleyre2007reaching} 
focused on reaching efficient and envy-free allocations. Similar procedures were also introduced in the area of two-sided matching \citep{roth1990random,ackermann2011uncoordinated}. 
The idea of using swap-deals was explored for instance by \cite{abbassi2013fair}, who studied barter exchange networks. \cite{gourves2017object}, \cite{saffidine2018constrained} and \cite{huang2019object} studied dynamics of swap-deals by considering an underlying social network constraining the possible interactions of the agents, albeit in a centralized perspective, and focusing on complexity issues. These results were recently extended by \cite{BentertEtAl2019}. 
\cite{brandt2019convergence} studied the convergence of swap-deals to Pareto-optimal allocations in housing, marriage and roommate markets.

\section{Preliminaries}
\label{sec:preliminaries}

We start by presenting the basic components of our model and the different fairness and efficiency concepts discussed in this paper.

\subsection{The model}

Let us consider a set $\agentSet = \{a_1, \ldots, a_n\}$ of $n$ agents and a set $\resourceSet = \{r_1, \ldots, r_n\}$ of $n$ resources. An allocation $\pi = \tuple{\pi_{a_1}, \ldots, \pi_{a_n}}$ is a vector of $\resourceSet^n$ such that $\bigcup_{a_i \in \agentSet} \pi_{a_i} = \resourceSet$. A element $\pi_{a_i} \in \resourceSet$ of $\pi$ represents the unique resource allocated to $a_i$ (recall that we consider housing markets where exactly one resource is assigned to each agent). The set of all allocations is denoted $\Pi$. 

Following previous work \citep{shapley1974cores,brams2003fair,bouveret2010fair,aziz2015fair}, agents are assumed to express their preferences over the resources through complete linear orders. We denote by $\succ_{i}$ agent $a_i$'s preferences, where $r_1 \succ_{i} r_2$ means that $a_i$ prefers $r_1$ over $r_2$. 

For a given linear order $\succ$, we use $top(\succ)$ to denote its top-ranked resource: $\forall r \in \resourceSet \backslash\{top(\succ)\}, top(\succ) \succ r$. Similarly, $snd(\succ)$ refers to the second most preferred resource in $\succ$. With a slight abuse of notation we will write $top(a_i)$ and $snd(a_i)$ to refer to $top(\succ_{i})$ and $snd(\succ_{i})$. When it is not clear from the context we will subscript these notations to specify the resource set. For instance $top_R(a_i)$ is the most preferred resource for agent $a_i$ among resources in $R \subseteq \resourceSet$.

Given a resource $r \in \resourceSet$ and an agent $a_i \in \agentSet$, we use $rank_{a_i}(r)$ to refer to the rank of $r$ in $\succ_{i}$. We have then $rank_{a_i}(top(a_i)) = n$, $rank_{a_i}(snd(a_i)) = n - 1$, etc...

For a given agent $a_i \in \agentSet$, $a_i$ strictly prefers $\pi$ to $\pi'$ (denoted as $\pi \succ_i \pi'$) iff $\pi_{a_i} \succ_i \pi'_{a_i}$. 
We also introduce weak preferences over an allocation. For a given agent  $a_i \in \agentSet$, $a_i$ weakly prefers $\pi$ to $\pi'$  (written $\pi \succeq_i \pi'$) iff $\pi_{a_i} \succ_i \pi'_{a_i}$ or $\pi_{a_i} = \pi'_{a_i}$  

Finally, a preference profile $L = \tuple{\succ_{i} \mid a_i \in \agentSet}$ is a vector of $n$ linear orders, one for each agent. An instance of a resource allocation problem is then a vector $I = \tuple{\agentSet, \resourceSet, L, \pi^0}$ composed of a set of agents $\agentSet$, a set of resources $\resourceSet$, a preference profile $L$ and an initial allocation $ \pi^0$. 

\medskip

In some settings, natural properties of  the agents' preferences can be identified, thus restricting the set of possible preference orderings. The notion of preference domain formalizes these restrictions. For a set of resources $\resourceSet$, we denote by $\mathcal{L}_\resourceSet$ the set of all linear orders over $\resourceSet$. Any subset $D \subseteq \mathcal{L}_\resourceSet$ is then called a preference domain.

We say that an instance $I = \tuple{\agentSet, \resourceSet, L, \pi^0}$ is defined over a preference domain $D$ if the preferences of the agents are selected inside $D$, that is, for every $\succ_i \in L$, we have $\succ_i \in D$. The set of all instances defined over $D$ is denoted $\mathcal{I}_D$. 

\subsection{Single-peaked preferences}

Under single-peaked preferences \citep{black1948rationale,arrow1951social}, the agents are assumed to share a common axis $\lhd$ over the resources and  individual rankings are defined with respect to this axis.

\begin{definition}
	Let $\resourceSet$ be a set of resources and $\lhd$ an axis  (i.e. a linear order) over $\resourceSet$. We say that a linear order $\succ$ is single-peaked with respect to $\lhd$ if we have:
	$$\forall (r_1, r_2) \in \resourceSet^2 \text{ s.t. } \enspace \left.\begin{array}{r}
	r_2 \lhd r_1 \lhd top(\succ), \\ 
	\text{or, } top(\succ) \lhd r_1 \lhd r_2
	\end{array}\right\} \Rightarrow r_1 \succ r_2.$$
\end{definition}

In words, $\succ$ is single-peaked with respect to $\lhd$ if $\succ$ is decreasing on both the left and the right sides of $top(\succ)$, where left and right are defined by $\lhd$.

For a given axis $\lhd$, we call $\mathcal{SP}_\lhd$ the set of all the linear orders single-peaked with respect to $\lhd$:
$$\mathcal{SP}_\lhd = \{\succ{} \in \mathcal{L}_\resourceSet \mid{} \succ \text{ is single-peaked w.r.t. } \lhd\}.$$

A preference domain $D$ is called single-peaked if and only if there exists an axis $\lhd$ such that $D \subseteq \mathcal{SP}_\lhd$. An instance $I$ is said to be single-peaked if it is defined over a single-peaked preference domain. 

\medskip

Based on these definitions, it is straightforward that restricting preference domains preserves single-peakedness.

\begin{observation}
	\label{obs:restricted-sp}
	Let $D$ be a preference domain single-peaked over $\lhd$. For a subset of resources $\resourceSet' \subseteq \resourceSet$, the domain $D'$ defined as the restriction of $D$ to $\resourceSet'$ is a single-peaked domain over $\lhd'$, the restriction of $\lhd$ to $\resourceSet'$.
\end{observation}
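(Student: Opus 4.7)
The plan is to directly unfold the definition of single-peakedness for the restricted domain. Fix any $\succ \in D$ with peak $t = top(\succ)$, let $\succ'$ denote its restriction to $\resourceSet'$, and set $t' = top(\succ')$. I need to verify that for all $r_1, r_2 \in \resourceSet'$ with $r_2 \lhd' r_1 \lhd' t'$ (or the symmetric chain to the right of $t'$), we have $r_1 \succ' r_2$. Since $\lhd'$ is by definition the restriction of $\lhd$ to $\resourceSet'$, the chain $r_2 \lhd' r_1 \lhd' t'$ is literally the same as $r_2 \lhd r_1 \lhd t'$ in the ambient axis, so I can freely switch between $\lhd$ and $\lhd'$.

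The key step is to show that $r_1 \lhd t$. If instead $r_1 = t$, then $t \in \resourceSet'$ and hence $t' = t$ (since $t$ is globally top-ranked under $\succ$), contradicting $r_1 \lhd' t'$. If $t \lhd r_1$, then combining with $r_1 \lhd t'$ yields $t \lhd r_1 \lhd t'$, and the single-peakedness of $\succ$ applied on the right of $t$ gives $r_1 \succ t'$; but $r_1 \in \resourceSet'$ and $t'$ is the most preferred element of $\resourceSet'$ under $\succ$, which is the desired contradiction. Hence $r_2 \lhd r_1 \lhd t$, and the single-peakedness of $\succ$ on the left of $t$ delivers $r_1 \succ r_2$, i.e. $r_1 \succ' r_2$.

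The symmetric chain $t' \lhd' r_1 \lhd' r_2$ is handled by the mirror argument: one rules out $r_1 = t$ and $r_1 \lhd t$ in the same way (the latter would force $r_1 \succ t'$ via single-peakedness on the left of $t$), concludes $t \lhd r_1 \lhd r_2$, and applies single-peakedness of $\succ$ on the right of $t$ to obtain $r_1 \succ' r_2$.

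The main obstacle, although quite mild, is precisely the case where $top(\succ) \notin \resourceSet'$: there $t'$ may lie on either side of $t$, and one must rule out witnesses $r_1, r_2 \in \resourceSet'$ that straddle the original peak, since the single-peakedness hypothesis on $\succ$ only applies to elements lying strictly on one side of $t$. This is exactly what the two contradictions above accomplish, after which the conclusion follows by a direct invocation of the definition applied to $\succ$.
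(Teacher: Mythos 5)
Your proof is correct. The paper states this observation without any proof, dismissing it as straightforward, so there is no argument to compare against; your write-up supplies exactly the verification that is being omitted, and it correctly isolates the one point that is not entirely immediate, namely that when $top(\succ) \notin \resourceSet'$ the restricted peak $t'$ may sit on either side of $t$, so one must rule out witnesses $r_1$ lying at or beyond the original peak before invoking single-peakedness of $\succ$ on one side of $t$. Both case analyses (excluding $r_1 = t$ via $t' = t$, and excluding $r_1$ on the wrong side of $t$ via $r_1 \succ t'$ contradicting the maximality of $t'$ in $\resourceSet'$) are sound, and the symmetric case is handled analogously.
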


\cite{ballester2011characterization} provided a characterization of single-peaked domains. In particular, they gave a necessary condition for a domain to be single-peaked: it should be worst-restricted \citep{sen1966possibility}.

\begin{definition}
	\label{def:worst-restricted}
	An instance $I = \tuple{\agentSet, \resourceSet, L, \pi^0}$ is worst-restricted if for any triplet of resources $(r_x, r_y, r_z) \in \resourceSet^3$, one of them is never ranked last in the restriction of $L$ to these three resources.
\end{definition}

\begin{proposition}[\cite{ballester2011characterization}]
	\label{prop:SP=>WR}
	If an instance is single-peaked then it is worst-restricted.
\end{proposition}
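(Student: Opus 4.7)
The plan is to show the contrapositive-friendly claim: in any single-peaked domain, for any triple of resources $(r_x, r_y, r_z) \in \resourceSet^3$, the resource that lies \emph{in the middle} according to the axis $\lhd$ is never ranked last by any agent. Worst-restrictedness then follows immediately, since at least one resource (the middle one) is protected from being last.

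Fix a triple of distinct resources and, without loss of generality, relabel so that the axis orders them as $r_x \lhd r_y \lhd r_z$. Let $\succ{} \in \mathcal{SP}_\lhd$ be the preference of an arbitrary agent and let $t = top(\succ)$. I would split on the position of $t$ relative to $r_y$ along $\lhd$. If $t = r_y$, then trivially $r_y$ is not the worst of the three (it is the best of all resources). If $t \lhd r_y$ (or $t = r_x$, or $t$ lies strictly to the left of $r_y$), then from $t \lhd r_y \lhd r_z$ the single-peakedness condition yields $r_y \succ r_z$, hence $r_y$ is not last among $\{r_x, r_y, r_z\}$. Symmetrically, if $r_y \lhd t$, then from $r_x \lhd r_y \lhd t$ we get $r_y \succ r_x$, and again $r_y$ is not last.

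Since this argument applies uniformly to every agent's preference in the profile $L$ (because each preference lies in $\mathcal{SP}_\lhd$), the middle resource $r_y$ of the triple is never ranked last in the restriction of $L$ to $\{r_x, r_y, r_z\}$. As the triple was arbitrary, the instance satisfies the worst-restricted condition.

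The proof is essentially a careful case analysis on the location of the peak relative to the middle element of the triple; there is no real obstacle, only the need to be precise about the two directions of the single-peaked inequality and to handle the degenerate case $t = r_y$ separately. The entire argument relies solely on the definitional decreasingness of $\succ$ on both sides of $top(\succ)$ as stated just above the proposition.
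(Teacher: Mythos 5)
Your proof is correct: the case analysis on the position of the peak relative to the middle resource $r_y$ of the triple is exhaustive (the three cases $t = r_y$, $t \lhd r_y$, $r_y \lhd t$ cover all possibilities under a linear order $\lhd$), and each case correctly invokes one direction of the single-peakedness condition to show $r_y$ beats at least one of $r_x, r_z$. The paper itself states this proposition without proof, citing Ballester and Haeringer (2011); your argument is the standard one underlying that reference, so there is no divergence to report.
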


Let us illustrate the single-peaked domain with a simple example.

\begin{example}
	Consider the following four linear orders defined over 3 resources.
	\begin{align*}
	\succ_1: \enspace & r_1 \succ_{1} r_2 \succ_{1} r_3 \\
	\succ_2: \enspace & r_3 \succ_{2} r_2 \succ_{2} r_1 \\
	\succ_3: \enspace & r_2 \succ_{3} r_1 \succ_{3} r_3 \\
	\succ_4: \enspace & r_2 \succ_{4} r_3 \succ_{4} r_1
	\end{align*}
	One can check that these linear orders represent a single-peaked preference profile with respect to $\lhd$ defined as: $r_1 \lhd r_2 \lhd r_3$. In fact these orders exactly correspond to $\mathcal{SP}_\lhd$. However, the following preferences are not single-peaked:
	\begin{align*}
	\succ_1: \enspace & r_1 \succ_{1} r_2 \succ_{1} r_3 \\
	\succ_2: \enspace & r_3 \succ_{2} r_1 \succ_{2} r_2 \\
	\succ_3: \enspace & r_2 \succ_{3} r_3 \succ_{3} r_1
	\end{align*}
	It can be checked that there is no linear order $\lhd$ over which these preferences are single-peaked. Indeed, they are not worst-restricted: every resource of the triplet $(r_1, r_2, r_3)$ is ranked last at least once, which violates Proposition \ref{prop:SP=>WR}.
\end{example}

\subsection{Efficiency and fairness criteria of an allocation}

There exists an extensive literature investigating how to define efficiency requirements for an allocation (see \citep{chevaleyre2006issues} and \citep{thomson2016introduction} for some surveys). 
Pareto-optimality is well-known basic efficiency requirement. 

\begin{definition}[Pareto-optimality]
	Let $\pi$ and $\pi'$ be two allocations. We say that $\pi'$ Pareto-dominates $\pi$ if and only if:
	\begin{equation*}
	\forall a_i \in \agentSet, \enspace  \pi' \succeq_i \pi, \textrm{~and ~}
	\exists a_j \in \agentSet, \enspace  \pi' \succ_j \pi.
	\end{equation*}
	
	An allocation $\pi$ is then said to be Pareto-optimal if there is no other allocation $\pi'$ that Pareto-dominates $\pi$.
\end{definition}

Efficiency can also be evaluated via social welfare measures.
One such measure is the \emph{average rank} ($ark$) of the resources held by the agents, defined as:
$$ark(\pi) = \frac{1}{n} \cdot \sum_{a_i \in \agentSet} rank_{a_i}(\pi_{a_i}).$$
Maximizing the average rank is of course equivalent to maximizing the sum of ranks, and can also be interpreted as the utilitarian social welfare, under the assumption that agents have Borda  utilities.

\medskip

However, maximizing the average rank or searching for Pareto-optimal solutions may not be  satisfactory as it can lead to particularly unfair allocations. 
For this reason, many fairness criteria have been introduced. We will focus here on maximizing the \emph{minimum rank} ($mrk$) of the resources held by the agents, defined as:
$$mrk(\pi) = \min_{a_i \in \agentSet} rank_{a_i}(\pi_{a_i}).$$

Again, if we were to interpret the rank as a cardinal utility function, the minimum rank would be equivalent to the egalitarian welfare. Maximizing the minimum rank follows Rawls' principle of maximizing the welfare of the worst-off \citep{rawls1971theory}. It has been introduced by \cite{pazner1978egalitarian} and is now a very common rule in fair division \citep{thomson1983problems,sprumont1996axiomatizing,nguyen2014computational}. 

\subsection{Properties of the procedures}
\label{subsec:proceduresQuality}

Let us now turn our attention to the allocation procedures. An allocation procedure $\proc$ is a mapping taking as input an instance $I$ and returning an allocation $\proc(I)$.

We will say that an allocation procedure is Pareto-optimal if it always return a Pareto-optimal allocation. Given that the agents initially own the resources in our setting, another very common efficiency requirement is that of \emph{individual rationality}. It stipulates that no agent should be worse-off in the final allocation.

\begin{definition}[Individual rationality]
	A procedure $\proc$ is said to be individually rational if for every instance $I = \tuple{\agentSet, \resourceSet, L, \pi^0}$ and every agent $a_i \in \agentSet$, we have:
	$$\proc(I) \succeq_i \pi^0_{a_i}$$
\end{definition}

Let us illustrate through an example the different criteria introduced so far.

\begin{example}
	\label{ex:firstEx}
	
	Let us consider the following instance with 5 agents and 5 resources. The preferences presented below are single-peaked with respect to $r_1 \lhd r_2 \lhd r_3 \lhd r_4 \lhd r_5$. The initial allocation  $\pi^0 = \tuple{r_1, r_2, r_3, r_4, r_5}$ is represented by the underlined resources.	
	\begin{align*}
	a_1: \enspace & r_3 \succ_{1} r_4 \succ_{1} r_5 \succ_{1} \held{r_2} \succ_{1} \heldc{r_1} \\
	a_2: \enspace & r_3 \succ_{2} \held{r_4} \succ_{2} r_5\succ_{2} \heldc{r_2} \succ_{2} r_1 \\
	a_3: \enspace & r_4 \succ_{3} \held{r_5} \succ_{3} \heldc{r_3} \succ_{3} r_2 \succ_{3} r_1 \\
	a_4: \enspace & \held{r_3} \succ_{4} \heldc{r_4} \succ_{4} r_5 \succ_{4} r_2 \succ_{4} r_1 \\
	a_5: \enspace & \held{r_1} \succ_{5} r_2 \succ_{5} r_3 \succ_{5} r_4 \succ_{5} \heldc{r_5} 	
	\end{align*}
	
	The allocation $\pi^0$ is not Pareto-optimal as it is Pareto-dominated by the squared allocation $\held{ \pi} = \tuple{r_2, r_4, r_5, r_3, r_1}$. We have $ark(\held{\pi}) = 4$ and $mrk(\held{\pi}) = 2$.
\end{example}

We conclude this section by introducing another desirable property that is \emph{strategy-proofness}. Informally, an allocation procedure is strategy-proof if no agent could get a strictly better outcome by lying instead of revealing her true preferences.

\begin{definition}
	Let $L = \tuple{\succ_j}_{a_j \in \agentSet}$ be a preference profile. For a given agent $a_i \in \agentSet$, an $i$-variant $L_{-i} = \tuple{\succ'_j}_{a_j \in \agentSet}$ of $L$ is a preference profile such that:
	$$\forall a_j \in \agentSet \backslash \{a_i\}, \succ_j = \succ'_j \text{ and  } \succ_i \neq \succ'_i.$$
	
	An allocation procedure $\proc$ is said to be strategy-proof if for every instance $I = \tuple{\agentSet, \resourceSet, L, \pi^0}$, every agent $a_i \in \agentSet$, every $i$-variant $L_{-i}$ of $L$, we have:
	$$\proc(\tuple{\agentSet, \resourceSet, L, \pi^0}) \succeq_i \proc(\tuple{\agentSet, \resourceSet, L_{-i}, \pi^0}).$$
\end{definition}

\section{Centralized allocation procedures for housing markets}
\label{sec:centralProcedures}

This section introduces two centralized allocation procedures that will be used as references in the paper. The first one is the TTC algorithm \citep{shapley1974cores}, which can be used without any domain restriction. It is well known to satisfy the three main desirable properties of an allocation procedure: \emph{Pareto-optimality}, \emph{individual rationality} and \emph{strategy-proofness}.
The second one,  Crawler \citep{bade2019matching}, is specially designed for single-peaked domains. It satisfies the same properties as TTC.
Both of theses procedures are based on the notion of deals.

\subsection{Improving deals}
\label{subsec:deals}

A deal is a vector of agents, usually denoted by $\mu = \tuple{a_1, \ldots, a_k}$, where $a_i \in \agentSet, \forall i \in \{1, \ldots, k \}$. It represents an exchange where agent $a_i$ gives her resource to agent $a_{i + 1}$ for each $i \in \{1, \ldots, k-1 \}$ and agent $a_k$ gives her resource to agent $a_1$. With a slight abuse of notations, given a deal  $\mu = \tuple{a_1, \ldots, a_k}$, agent   $a_{k+1}$ will refer to  agent  $a_1$ and agent $a_0$ (i.e. the agent before $a_1$) to agent $a_k$.  For the particular case of deals involving only two agents, i.e. $k = 2$, we will talk about \emph{swap-deals}.

Interestingly, it can be observed that in housing markets, any reallocation (permutation of resources) can be implemented as a collection of disjoint cycle-deals \citep{shapley1974cores}. This notion of deal is thus sufficient in this context.

Let us now introduce the concept of improving deals.

\begin{definition}
	Let $\pi$ be an allocation, and $\mu = \tuple{a_1, \ldots, a_k}$ a deal involving $k \geq 1$ agents. The allocation $\pi[\mu]$ obtained by applying the deal $\mu$ to $\pi$ is defined by: 
	\[\begin{cases}
	\pi[\mu]_{a_i} = \pi_{a_{i - 1}} &\mbox{ if } i \in \{1, \ldots, k \}, \\
	\pi[\mu]_{a_i} = \pi_{a_i} & \mbox{ otherwise}.
	\end{cases}\]
	
	A deal is said to be \emph{improving} if $\pi[\mu] \succ_i \pi$ for every agent $a_i$ involved in $\mu$.
\end{definition}

\noindent Observe that for a deal $\mu$ of length 1, we have $\pi[\mu] = \pi$. This trivial case consists of an agent giving her resource to herself. It will be useful to simplify the presentation of TTC and  Crawler. In particular, when an agent already holds her top resource, she should also hold it in the final allocation because of individual rationality.

It is also straightforward to see that a procedure applying only improving deals trivially satisfies individual rationality.

\subsection{Gale's Top Trading Cycle algorithm}

The TTC algorithm takes as input an instance $I = \tuple{\agentSet, \resourceSet, L, \pi^0}$ and proceeds as follows. The algorithm maintains a set of available agents $N$ and a set of available resources $R$ where initially $R = \mathcal{R}$ and $N = \mathcal{N}$. At each step of the algorithm,  a directed bipartite graph $G = \tuple{V, E}$, with $V = N \cup R$, is defined. The nodes of $G$ represent the agents in $N$ and  the resources in $R$, and the set of edges $E$ is such that:
\begin{itemize}
	\item there is a directed edge $(a_i, r_i)$ between $a_i$ and $r_i$ if and only if $r_i = top_R(a_i)$ i.e. agents are linked to their preferred  resource in $R$, 
	
	\item there is a directed edge $(r_i, a_i)$ between $r_i$ and $a_i$ if and only if $r_i = \pi^0_{a_i}$ i.e. resources are linked to their owner in $\pi^0$.
\end{itemize}
Note that there always exists at least one cycle in $G$ and that cycles correspond to improving deals. The cycle-deals constructed can be of size 1 if an agent already owns her top resource in $R$.
The TTC algorithm selects one of the cycles $\mu$ in $G$. The agents and resources involved in $\mu$ are then removed from $N$ and $R$. A new graph $G'$ is computed with the remaining agents and resources. The process is iterated on the new graph $G'$ and $\pi^0[\mu]$ until an empty graph has been reached. 
Note that the outcome of TTC is unique (regardless of the possibly different choices of cycles to implement during the process).

\revisedC{
\begin{algorithm}[t]
	\DontPrintSemicolon
	\KwIn{An instance $I = \tuple{\agentSet, \resourceSet, L, \pi^0}$}
	\KwOut{An allocation $\pi$}
	$\pi \gets$ empty allocation \;
	$R \gets \resourceSet$: list of resources\;
	$N \gets \agentSet$: list of agents\;

	\While {$N \neq \emptyset$} {
		$E \leftarrow \emptyset$\;
		$V  \leftarrow N \cup R$\;
		\For {each agent $a \in N$} {
			Add a directed edge in $E$ between $a$ and $top_R(a)$
			}
		\For{each resource $r \in R$}{
			Add a directed edge in $E$ between $r$ and her owner in $\pi^0$
		}

		Select a cycle $\mu$ from the graph $G = \tuple{V, E} $ \;
		$do(\mu, \pi)$\;
	}
	\Return $\pi$
	\caption{ TTC algorithm}
	\label{algo:TTC}
\end{algorithm}
}

\hidden{
	
	\begin{figure}
		\centering
		\begin{tikzpicture}		
		\tikzstyle{arg-out}=[circle,draw, line width=1pt]
		
		\node (G) at (-0.2, 0.75) [] {$G:$};
		
		\node (A1) at (1,1.5) [arg-out,label=above:\footnotesize{$a_1$}] {};
		\node (A2) at (2,1.5) [arg-out,label=above:\footnotesize{$a_2$}] {};
		\node (A3) at (3,1.5) [arg-out,label=above:\footnotesize{$a_3$}] {};
		\node (A4) at (4,1.5) [arg-out,label=above:\footnotesize{$a_4$}] {};
		\node (A5) at (5,1.5) [arg-out,label=above:\footnotesize{$a_5$}] {};

		\node (R1) at (2,0) [arg-out,label=below:\footnotesize{$r_1$}] {};
		\node (R2) at (5,0) [arg-out,label=below:\footnotesize{$r_2$}] {};
		\node (R3) at (3,0) [arg-out,label=below:\footnotesize{$r_3$}] {};
		\node (R4) at (4,0) [arg-out,label=below:\footnotesize{$r_4$}] {};
		\node (R5) at (1,0) [arg-out,label=below:\footnotesize{$r_5$}] {};

		\draw [->, very thick] (A1) -- (R1);
		\draw [->, very thick] (A2) -- (R5);
		\draw [->, very thick] (A3) -- (R3);
		\draw [->, very thick] (A4) -- (R4);
		\draw [->, very thick] (A5) -- (R4);
		
		\draw [->,very thick] (R5) -- (A1);
		\draw [->, very thick] (R2) -- (A5);
		\draw [->, very thick] (R1) -- (A2);
		\draw [->, very thick] (R4) -- (A4);
		\draw [->, very thick] (R3) -- (A3);
		
		\end{tikzpicture}
		\caption{Bipartite graph created by the TTC algorithm for $\pi^0$ as defined in Example \ref{ex:firstEx}}
		\label{fig:exTTC}
	\end{figure}
}

\revisedC{
A formal description of  TTC procedure is given in Algorithm \ref{algo:TTC}. Note that we make use of the sub-procedure $do(\mu, \pi)$ which simply implements the cycle-deal $\mu$ and adds the resulting allocation (restricted to the agents involved in the deal)  in the allocation $\pi$. It then removes the agents and the resources involved in the deal $\mu$ from the lists of available agents and resources, $N$ and $R$ respectively.
}

\begin{example}
	\label{ex:TTC}
	Let us consider the instance defined in Example \ref{ex:firstEx}. 
	The reader can check that the allocation returned by TTC is $\pi^{TTC} = \tuple{r_5, r_2, r_4, r_3, r_1}$. 
	Indeed, the first cycle-deal that can be applied is $\mu_1 = \tuple{a_3, a_4}$. The next one is $\mu_2 = \tuple{a_1, a_5}$. Finally, $a_2$ remains with her initial resource which corresponds to the cycle $\mu_3 = \tuple{a_2}$. 
\end{example}

\subsection{Crawler algorithm}

In Crawler algorithm, agents are initially ordered along the single-peaked axis according to the resource they hold. The first agent is then the one holding the resource on the left end side of the axis. As in TTC, the list of available resources is denoted by $R$ and is ordered according to the single-peaked axis. A list of available agents $N$ such that the $i^{th}$ agent of the list is the one who holds the $i^{th}$ resource in $R$ is also maintained. 

Considering agents sequentially from the first in $N$ to the last in $N$, Crawler checks for each agent $a_i$ where her top resource, $top_R(a_i)$, is on the axis.\footnote{Note that the algorithm can equivalently be executed from the last agent to the first one.}

	\begin{itemize}
		\item  If $top_R(a_i)$ is on her right, the algorithm moves to the next agent. 
		\item  If $a_i$ holds her top resource $top_R(a_i)$, then $top_R(a_i)$ is allocated to $a_i$. Agent $a_i$ and the resource $top_R(a_i)$ are removed from $N$ and $R$. The algorithm starts again from the agent on the left end of the axis. 
		\item If $top_R(a_i)$ is on the left of $a_i$, the agent is allocated her top resource $top_R(a_i)$. We denote by $t^*$ the index of $top_R(a_i)$ and $t$ the index of $a_i$ (we have $t^* < t$). Then, every agent between $t^*$ and $t - 1$ receives the resource held by the agent on her right (the resources ``crawl'' towards left). Once again, $a_i$ and $top_R(a_i)$ are removed from $N$ and $R$ and the algorithm restarts from the first agent.
\end{itemize}
Once all the resources have been allocated, the algorithm terminates. 

\revisedC{
A formal description of Crawler procedure is given in Algorithm \ref{algo:Crawler}. Note that we make use of the sub-procedure $pick(a_{t^*},r,N,R,\pi)$ which simply assigns the resource $r$ to the given agent $a_{t^*}$  in the allocation $\pi$, and then removes the agent and the resource from the lists of available agents and resources, $N$ and $R$ respectively. Since the list of resources is ordered following the single-peaked axis and the $i^{th}$ agent in $N$ corresponds to the owner of the  $i^{th}$ resource in $R$, the removal of $r$ and $a_{t^*}$  is in fact equivalent to assigning $r$ to agent $a_{t^*}$ and crawling the resources from right to left.

\begin{algorithm}[t]
	\DontPrintSemicolon
	\KwIn{An instance $I = \tuple{\agentSet, \resourceSet, L, \pi^0}$ single-peaked with respect to $\lhd$}
	\KwOut{An allocation $\pi$}
	$\pi \gets$ empty allocation \;
	$R \gets \resourceSet$: list of resources sorted accordingly to $\lhd$\;
	$N \gets \agentSet$: list of agents such that the $i^{th}$ agent is the one who initially holds the $i^{th}$ resource in $R$\;
	\While {$N \neq \emptyset$} {
		$t^* \gets |N|$\;
		\For {$i = 0$ to $|N| - 1$} {
			\If {$r_i \succ_{i} r_{i + 1}$} {
				$t^* \gets i$\;
				Break\;
			}
		}
		$r \gets top_R(a_{t^*})$ \;
		$pick(a_{t^*},r,N,R, \pi)$\;
	}
	\Return $\pi$
	\caption{Crawler algorithm}
	\label{algo:Crawler}
\end{algorithm}
}

\medskip

Let us illustrate the execution of  Crawler: 

\begin{example}
	On the instance of Example \ref{ex:firstEx}, agent $a_4$ is the first agent whose top resource is not on her right, she thus receives her top resource $r_3$. The second step matches agent $a_3$ to $r_4$. On the third step, agents $a_1$ and $a_2$ both  have their top resources (among the remaining resources) on the right but the last agent $a_5$ has hers on her left. $a_5$ is then  matched to $r_1$.  Resource $r_2$ crawls to agent  $a_1$  and resource $r_5$ crawls to agent $a_2$. On the fourth step, $a_2$ picks her current resource $r_5$. Finally, $a_1$ is assigned resource $r_2$.
	The allocation returned by Crawler is $\pi^{C} = \tuple{r_2, r_5, r_4, r_3, r_1}$. 
	At each step $i$ of the procedure, an improving cycle-deal $\mu_i$ is applied (with the last agent in the cycle picking her top resource among the remaining ones): $\mu_1 = \tuple{a_3,a_4}$, $\mu_2 = \tuple{a_3}$,  $\mu_3 = \tuple{a_1,a_2,a_5}$, $\mu_4 = \tuple{a_2}$, $\mu_4 = \tuple{a_1}$. 
	One can observe that on this example the allocation returned by Crawler is not the same as the one returned by TTC. However, both procedures lead to the same minimum rank $mrk$ and average rank $ark$.
\end{example}
Interestingly, a variant of this procedure allows to check Pareto-optimality in single-peaked domains in linear time \citep{beynier2020optimal}.

\section{Swap dynamics: a family of procedures based on swap-deals}
\label{sec:decentralProcedures}

We will now focus on dynamics based on local exchanges between the agents. Let us first introduce some additional definitions.
	
	Given an allocation $\pi$, we denote by $C_k(\pi), k \geq 2,$ the set of all the improving deals of size at most $k$ that can be applied from $\pi$:
	$$C_k(\pi) = \{\mu \mid \mu \text{ is an improving deal and } |\mu| \leq k \}.$$
	
	When investigating procedures based on improving exchanges, we will try to reach allocations that are stable with respect to some deals.
	
	\begin{definition}
		For a given $k \in \{1, \ldots, n \}$, an allocation $\pi$ is said to be stable with respect to $C_k$ if $C_k(\pi) = \emptyset$.
	\end{definition}

	It is obvious from this definition that if an allocation is stable with respect to $C_k$ for a given $k$, it is also stable with respect to any $k' < k$. Moreover, since we are considering housing markets, it can be observed that an allocation is Pareto-optimal if and only if it is stable with respect to $C_n$.

\medskip

Dynamics based on local exchanges start from an initial allocation and let the agents negotiate improving cycle-deals involving at most $k$ agents until reaching an allocation stable with respect to $C_k$. 

	Following this process, to each $k \in \{1, \ldots, n\}$ corresponds a family of allocation procedures based on $C_k$. Indeed, at each step, the improving deal to be implemented can be selected in many different ways. We call \emph{selection heuristic} a mapping giving the deal to implement at a given step of the procedure. A dynamic based on local exchanges will thus be defined by a given $k$ and a specific selection heuristic. 
	Depending on the selection heuristic, it may be the case that the selected deal is not an improving one. Such a deal could then be refused by the agents. We will say that a deal is \emph{successful} if all agents involved in it  agree to exchange. The sequence of proposed deals, together with the fact they were successful or not will be called a \emph{history}.
	
\revisedC{
\begin{algorithm}[t]
	\DontPrintSemicolon
	\KwIn{An instance $I = \tuple{\agentSet, \resourceSet, L, \pi^0}$, \;  a maximum size $k$ for the deals, \; a selection heuristic $\sigma$}
	\KwOut{An allocation $\pi$ stable with respect to $C_k$}
	$\pi \gets \pi^0$ \;

	\While {$\pi$ is not stable with respect to $C_k$} {
		$\mu \leftarrow \sigma(\agentSet, k, \dots) $ \;

		\If{$\mu$ is successful} {
				$do(\mu, \pi)$\;
			}
	
	}
	\Return $\pi$
	\caption{Cycle-deals dynamics}
	\label{algo:swaps}
\end{algorithm}

A general formal description of cycle-deals dynamics is given in Algorithm \ref{algo:swaps}.  The $do(\mu, \pi)$ method is similar to the one described in Algorithm \ref{algo:TTC} and implements the deal $\mu$ (as chosen by the selection heuristic $\sigma$) on the allocation $\pi$. Note that the selection heuristic itself takes as input at least the set of agents $\agentSet$ and the maximum size $k$ of the cycle-deals (but it may be more informed and take more parameters, as we shall discuss later in this section). Unless stated otherwise we shall assume from now on that $k=2$ and omit this parameter, as we will mainly be interested in deals of size 2. We will also use \emph{swap-stability} to refer to stability with respect to $C_2$.

}

We say that the procedure has \emph{reached} the allocation obtained upon termination. 
Termination occurs when, for every possible swap-deal $\tuple{a_x, a_y}$, there exists a latest unsuccessful proposal, such that there was subsequently no successful swap-deal involving $a_x$ nor $a_y$. This guarantees in particular that the process cannot end as long as there remains an improving deal that has not yet been proposed. 
To ensure termination, the selection heuristic should not prevent some deals from happening. We will thus require heuristics to satisfy a property of full coverage as defined below.

\begin{definition}
	A selection heuristic $\sigma$ has \emph{minimal}  (resp. \emph{full}) \emph{coverage} if for any $a_x, a_y \in \agentSet^2$,  there exists at least one round in the sequence (resp. after the latest successful swap-deal involving $a_x$ or $a_y$ if there is one), when the swap-deal $\tuple{a_x, a_y}$  is proposed. 
\end{definition}

	$\swapfamily$ will denote the family of swap dynamics defined with respect to a selection heuristic with full coverage. 
	It is useful to make a further distinction between different types of selection  heuristics, depending on the information they take as input. In particular, heuristics may require preferential information (e.g. which resources agents would be happy to swap their current resource with), or on the contrary, be solely based on observable information (e.g. the history of deals). We will mostly focus on the latter in what follows.

	 A history-based selection heuristic $\sigma$ is a function taking as input the set of agents $\agentSet$, the history of deals $h$, and returning a  deal $\sigma(\agentSet, h)$ to be proposed. Swap dynamics which rely on history-based selection heuristics will be called \emph{history-based swap dynamics}.

\medskip

A swap dynamics equipped with a given history-based selection heuristic $\sigma$ on the instance $I = \tuple{\agentSet, \resourceSet, L, \pi^0}$, proceeds iteratively as follows. The history $h$ is initially empty. At each round corresponding to an allocation $\pi$ and a history $h$, the allocation is updated to $\pi'$ defined as:
\[\pi' = \begin{cases}
\pi[\sigma(\agentSet, h)] & \text{if the agents in } \sigma(\agentSet, h) \text{ agree on swapping } \\
\pi & \text{otherwise}.
\end{cases}\]
The deal $\sigma(\agentSet, h)$ together with its ``success status'' are then added to the history $h$.


We now give some examples of selection heuristics which will be studied in this paper. We start with \emph{round-robin} heuristics, which simply specify a predefined way to order the different pairs of agents (i.e. possible swaps), and repeat it until termination. There are several ways to proceed, we give here two prominent examples:

\begin{itemize}
	
	\item \emph{Round-Robin over the Agents} ($M_2$-RRA): agents are ordered and paired following their name ($\tuple{a_1, \ldots, a_n}$) in a round-robin fashion. The first agent $a_1$ is paired with each other agent, the second agent $a_2$ is paired with each other agent $a_j$ with $j > 2$ and so on. The agents then go over possible deals by iterating over the following sequence: 
	\[
	(a_1, a_2), (a_1, a_3), \ldots, (a_1, a_n), (a_2, a_3),\ldots, (a_{n-1}, a_n), (a_1, a_2), \ldots 
	\]
	\item \emph{Round-Robin over Pairs of agents}  ($M_2$-RRP): agents are ordered and paired following their name ($\tuple{a_1, \ldots, a_n}$) in a round-robin fashion. In this case, the first agent is paired with the second agent, the second agent is paired with the third... Hence, the agents go over possible deals by iterating over the following sequence: 
	\[
	(a_1, a_2), (a_3, a_4),  \ldots, (a_{n-1},a_n),(a_1, a_3), (a_2,a_4), \ldots, (a_1,a_n), \ldots
	\]
\end{itemize}

There is an obvious bias in the way deals are selected, it is thus natural to introduce some stochasticity in the process. One way to do this is as follows: 

\begin{itemize}	
	\item \emph{Randomized Round-Robin over deals} ($M_2$-RRR): the heuristic picks uniformly at random a permutation among all the possible deals. This permutation defines a round-robin order in which the deals are considered. The agents then go over possible deals by iterating over this permutation. 
\end{itemize}

The three previous heuristics all guarantee by construction that all possible swap-deals were proposed before a swap-deal gets proposed again. The following natural heuristic do not have such guarantee. 

\begin{itemize}
	\item \emph{Uniform} ($M_2$-U): a pair of agents (i.e. a swap-deal)  $(a_i,a_j)$ is selected uniformly at random among all possible pairs. 
	\item \emph{Random matchings} ($M_2$-RM): 
	this heuristic proceeds in succession of matching steps where at each step, every agent is matched to a unique other agent and all the resulting swap-deals are proposed to the agents. This simulates a natural market where agents are paired randomly and simultaneously try to perform bilateral deals. 
\end{itemize}

Note that these selection heuristics do not all have the same degree of decentralization. In particular, $M_2$-U can be easily executed in a fully decentralized way. Round-robin heuristics require a central entity to broadcast the sequence of pairs. Then, the agents can meet in a distributed way. Finally, $M_2$-RM can be implemented in a distributed way using a protocol ensuring that an agent can not encounter several agents simultaneously. Each agent $a_i$ then selects uniformly at random an agent $a_j$ to encounter and contact her. If agent $a_j$ is already engaged in another encounter, $a_i$ selects another agent.

\medskip

We conclude the section by a straightforward observation: different selection heuristics can lead to different outcomes given the same initial allocation. Let us illustrate this with a simple example.

\begin{example}
	Consider the instance described in Example \ref{ex:firstEx}. We showed that Crawler and TTC return different allocations. In fact, these allocations can be reached by a sequence of swap-deals. 
	Observe first that the cycle-deals applied by TTC are at most of length 2 (Example \ref{ex:TTC}). The allocation $\pi^{TTC} = \tuple{r_5, r_2, r_4, r_3, r_1}$ is obtained through two swap-deals: $\tuple{a_3, a_4}$ and $\tuple{a_1, a_5}$. 
	The allocation $\pi^C = \tuple{r_2, r_5, r_4, r_3, r_1}$ returned by Crawler also is reachable by swap-deals. It is obtained by applying the following sequence: $\tuple{a_3, a_4}$, $\tuple{a_2, a_5}$ and $\tuple{a_5, a_1}$. 
	Since these two allocations are Pareto-optimal, they are swap-stable. This shows that the way swap-deals are selected may affect the final stable allocation. Notice that these are not the only two swap-stable allocations reachable from $\pi^0$ of the instance: $\tuple{r_3,r_2,r_5,r_4,r_1}$ and $\tuple{r_2,r_3,r_5,r_4,r_1}$ are the two other ones.
\end{example}

This example suggests that the allocations returned by Crawler and by TTC can both be reached via swap-deals ---a point we will make formal in the next section. 
More generally, as TTC and  Crawler both provide desirable guarantees, on Pareto-optimality, individual rationality and strategy-proofness, it is natural to study whether swap dynamics enjoy similar properties.

\section{Properties of swap dynamics}
\label{sec:swapEfficiency}

We now investigate  properties of swap dynamics. 
Recall first the easy observation made in Section \ref{sec:centralProcedures}: swap dynamics, because they rely on improving deals, are individually rational. But what about strategy-proofness and Pareto-optimality? 
We first discuss strategy-proofness, showing that such procedures are in general subject to manipulation.
We next prove that any allocation stable with respect to swap-deals is Pareto-optimal and that both the allocation returned by TTC and  Crawler can be reached via swap-deals. Finally, we show that the single-peaked domain is a maximal domain when it comes to guaranteeing Pareto-optimality.

\subsection{Strategy-proofness}

Although strategy-proofness is usually defined for centralized procedures, the question is still relevant for swap dynamics. 

In a swap dynamic, an agent has the opportunity to behave strategically only when she is asked to accept or reject a proposed deal. 
	As usual, we assume here that a potential manipulator has full knowledge of the preferences of the other agents and is aware of the fact that the selection heuristics has minimal coverage. In other words, the manipulator only needs to know whether she will ever have the opportunity to swap with some other agent or not.
A swap dynamic will be called \emph{strategy-proof} when, for every instance, at no point during the procedure an agent can be better off by accepting a non-improving deal or by refusing an improving one. Otherwise, the procedure is \emph{manipulable}. 

\begin{proposition}
	\label{prop:no-strategyproof}
	Any history-based swap dynamic with minimal coverage is manipulable.
\end{proposition}

\begin{proof} 
	We are given an arbitrary history of deals $h$ starting with $(a_x,a_y)$, as produced by the  selection heuristic.  
	We are now going to show that we can build an instance $I = \langle \agentSet, \resourceSet, L, \pi^0 \rangle$ such that agent $a_x$ will have an incentive to accept the first swap-deal $(a_x,a_y)$, even though this is not rational.  
	The instance involves $a_x$, $a_y$, $a_z$, as well as $n-3$ other dummy agents, and assumes preferences to be single-peaked with respect to $r_1 \lhd r_2 \lhd r_3 \lhd r_{d_1} \lhd \dots \lhd r_{d_{n-3}}$. 
	\begin{align*}
	a_x : \enspace & r_2 \succ_{1} \heldc{r_3} \succ_{1} r_1 \succ_1 r_{d_1} \dots \\
	a_y : \enspace & r_2 \succ_{2} r_3 \succ_{2} \heldc{r_1} \succ_2 r_{d_1} \dots  \\
	a_z : \enspace & r_1 \succ_{3} \heldc{r_2} \succ_{3} r_3 \succ_3 r_{d_1} \dots  \\
	d_1: \enspace & \heldc{r_{d_1}} \succ \dots \\
	\vdots \enspace & \\
	d_{n-3}:   \enspace & \heldc{r_{d_{n-3}}} \succ \dots \\
	\end{align*} 
	Observe first that all the dummy agents have their top resource from the start, hence they will not be involved in any deal. 
	Now, consider the situation when $a_x$ is given the opportunity to deal with $a_y$. 
	If $a_x$ is truthful (as all the other agents), no improving swap-deal involving agent $a_x$ can occur, and she will end up with resource $r_3$. 
	Now, suppose instead that $a_x$ strategically accepts $\tuple{a_x, a_y}$.
	This deal is improving for $a_y$ so if $a_x$ agrees on it, it will be implemented. In that case, $a_y$ can no longer exchange with $a_z$, hence the only improving swap-deal left is 
	$\tuple{a_x, a_z}$. 
	Because the selection heuristic has minimal coverage, this opportunity will occur at some point in the future. 	
	In the final allocation, agent $a_x$ would then hold her top resource $r_2$ (obtained from the swap-deal 	$\tuple{a_x, a_z}$). 
	\qed
\end{proof}

Now, suppose that for a given selection heuristic, there exists a swap-deal $\tuple{a_x, a_y}$ that will never be proposed. Consider then the instance where all the agents have their top resource, except for $a_x$ and $a_y$ who would be happy to perform a swap-deal---but will never get a chance to. As no other swap-deal is possible, the outcome is certainly not Pareto-optimal. 
This leads to the following observation.

\begin{observation} \label{obs:subSet}
	Consider a history-based swap dynamic $M_2$ defined with respect to a selection heuristic $\sigma$. If $M_2$ is Pareto-optimal, then $\sigma$ has minimal coverage.
\end{observation}

Together with Proposition \ref{prop:no-strategyproof}, this tells us that there is a fundamental tension between strategy-proofness and Pareto-optimality for history-based swap-deal procedure. 

\begin{proposition}
	No history-based swap dynamic can be both Pareto-optimal and strategy-proof.
\end{proposition}

The reason why other types of dynamics may not be concerned by this result is that they can potentially condition the selection of deals to the preferences.

\subsection{Pareto-optimality of swap dynamics}

We show here that any allocation reached by swap dynamics is Pareto-optimal.

\begin{theorem} \label{theo:PO}
	Let $I$ be a single-peaked instance, every allocation $\pi$ in $I$ that is stable with respect to swap-deals also is stable with respect to $C_n$.
\end{theorem}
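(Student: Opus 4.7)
The plan is to prove the contrapositive: if $\pi$ is not stable with respect to $C_n$, then it is not stable with respect to swap-deals either. So assume there exists an improving cycle-deal $\mu = \tuple{a_1, \ldots, a_k}$ with $k \geq 2$. If $k = 2$, $\mu$ is itself an improving swap-deal and we are done; the substantive work is therefore to reduce the case $k \geq 3$ to the existence of an improving swap between two participants of $\mu$.

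The key tool I would establish first is a standard single-peaked ``interval lemma'': for any agent $a_i$, if $r \succ_i r'$ and $r''$ lies on the axis strictly between $r'$ and $r$ (on either side of $r'$), then $r'' \succ_i r'$. This follows by a short case split on whether the peak of $a_i$ lies beyond $r$, strictly between $r$ and $r'$, or beyond $r'$ (the last case being ruled out by $r \succ_i r'$), together with the fact that single-peaked preferences decrease monotonically as one moves away from the peak along $\lhd$. The subcase where the peak sits strictly between $r$ and $r'$ is the most delicate and needs to be split further on whether $r''$ lies on the same side of the peak as $r$ or as $r'$; in both sub-subcases the conclusion follows from a direct distance comparison, possibly combined with transitivity through $r$.

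With the lemma available, I would relabel the participants of $\mu$ as $b_1, \ldots, b_k$ so that $\pi_{b_1} \lhd \pi_{b_2} \lhd \ldots \lhd \pi_{b_k}$ along the axis, and encode the cycle as a derangement $\sigma$ of $\{1, \ldots, k\}$ with $b_i$ receiving $\pi_{b_{\sigma(i)}}$ in the deal. If $\sigma(i) > i$, the resource $\pi_{b_{i+1}}$ lies between $\pi_{b_i}$ and the strictly preferred $\pi_{b_{\sigma(i)}}$, so the interval lemma yields $\pi_{b_{i+1}} \succ_{b_i} \pi_{b_i}$; symmetrically, $\sigma(i) < i$ yields $\pi_{b_{i-1}} \succ_{b_i} \pi_{b_i}$. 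Label $d_i = R$ in the first case and $d_i = L$ in the second. Since $b_1$ holds the leftmost resource her gain must come from the right, forcing $d_1 = R$; symmetrically $d_k = L$.

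The proof then finishes by a discrete intermediate-value argument: as $(d_1, \ldots, d_k)$ starts at $R$ and ends at $L$, some consecutive pair satisfies $d_i = R$ and $d_{i+1} = L$. For that pair, $b_i$ strictly prefers $\pi_{b_{i+1}}$ to $\pi_{b_i}$ while $b_{i+1}$ strictly prefers $\pi_{b_i}$ to $\pi_{b_{i+1}}$, so $\tuple{b_i, b_{i+1}}$ is an improving swap-deal, contradicting the swap-stability of $\pi$. I expect the main obstacle to be setting up the interval lemma cleanly; once it is in place, the rest of the argument is essentially bookkeeping along the axis.
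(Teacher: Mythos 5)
Your proof is correct, but it follows a genuinely different route from the paper's. The paper argues by contradiction on a \emph{smallest} improving cycle and never touches the axis directly: it only invokes the fact that single-peaked profiles are worst-restricted (Proposition \ref{prop:SP=>WR}), and derives, via an induction on the cycle length, a triple of resources each of which is ranked last by some agent. You instead work directly with the axis $\lhd$: your interval lemma (if $r \succ_i r'$ and $r''$ lies between $r'$ and $r$, then $r'' \succ_i r'$) is a correct and standard consequence of single-peakedness, and the sort-the-participants-then-find-a-direction-switch argument is sound --- $d_1 = R$ and $d_k = L$ force a consecutive pair $(R, L)$, which is exactly an improving swap. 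Just note that when $\sigma(i) = i+1$ the resource $\pi_{b_{i+1}}$ is not \emph{strictly} between $\pi_{b_i}$ and $\pi_{b_{\sigma(i)}}$, so that boundary case should be dispatched directly (it is immediate, since $b_i$ then receives $\pi_{b_{i+1}}$ in the cycle itself). Comparing the two: your argument is more constructive and self-contained --- it needs no induction, no minimality assumption, and it pinpoints \emph{which} pair of agents can swap (two participants adjacent on the axis at a direction switch), which would also give an explicit procedure for decomposing a long improving cycle. The paper's argument, by relying only on worst-restrictedness, actually establishes the theorem for the strictly larger worst-restricted domain, which is a generality your axis-based lemma does not provide.
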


\begin{proof}
	
		Let us consider an allocation $\pi$ stable with respect to $C_2$ but not with respect to $C_n$. Since $\pi$ is not stable with respect to $C_n$, there exists at least one improving deal in $\pi$. Let $\mu$ be the smallest improving cycle-deal, i.e., which involves the smallest number of agents. Assume without loss of generality that  $\mu = \tuple{a_1, \ldots, a_k}$, with $2 < k \leq n$.
	
	Since $\mu$ is an improving deal, every agent in $\mu$ is happy to exchange with the agent coming before her in $\mu$:
	\begin{align}
	\label{eq:proofSwapPOLine1}
	\pi_{a_{i - 1}} \succ_{i} \pi_{a_i}, \forall a_i \in \mu.
	\end{align}
	
	Moreover, as there exists no improving swap-deal in $\pi$ ($C_2(\pi) = \emptyset$) agents do not want to exchange with the agent coming after them in $\mu$:
	\begin{align}
	\label{eq:proofSwapPOLine2}
	\pi_{a_i} \succ_i \pi_{a_{i + 1}}, \forall a_i \in \mu.
	\end{align}
	Indeed, because of \eqref{eq:proofSwapPOLine1}, an improving swap-deal would otherwise exist in $\pi$.
	
	We now show by induction on the size of $\mu$, denoted by $k$, that such an improving deal cannot exist.
	
	If $k = 3$, consider without loss of generality that $\mu = \tuple{a_1, a_2, a_3}$. From \eqref{eq:proofSwapPOLine1} and \eqref{eq:proofSwapPOLine2}, we obtain the following profile where underlined resources indicate the initial allocation:
		$$\begin{array}{cl}
		a_1: & \pi_{a_3} \succ_1 \heldc{\pi_{a_1}} \succ_1 \pi_{a_2}, \\
		a_2: & \pi_{a_1} \succ_2 \heldc{\pi_{a_2}} \succ_2 \pi_{a_3}, \\
		a_3: & \pi_{a_2} \succ_3 \heldc{\pi_{a_3}} \succ_3 \pi_{a_1}.
		\end{array}$$
	The triplet of resources $\tuple{\pi_{a_1}, \pi_{a_2}, \pi_{a_3}}$ is thus a witness of the violation of the worst-restrictedness condition for a profile to be single-peaked (Proposition \ref{prop:SP=>WR}). Indeed, all the three resources are ranked last by an agent when we restrict our attention to these resources. The contradiction is thus established.
	
	Suppose now that $\pi$ is stable with respect to $C_{k - 1}$. We will show that no improving deal of size $k$ exists in $\pi$. From the induction hypothesis, we get that:
	\begin{align}
	\label{eq:proofSwapPOLine3}
	\pi_{a_i} \succ_i \pi_{a_j}, \forall a_i, a_j \in \mu, a_j \not \in \{a_{i - 1}, a_i\}.
	\end{align}
	Indeed there would otherwise exist two agents $a_l$ and $a_l'$, that are not next to one another in $\mu$, such that $\pi_{a_l'} \succ_l \pi_{a_l}$. It would then have been possible to ``cut'' $\mu$ between those two agents so that $a_l$ receives $\pi_{a_l'}$. The new cycle-deal obtained would also have been improving and then an improving deal of size strictly smaller than $k$ would exist.
	
	Because the profile is single-peaked, it is also worst-restricted (Proposition \ref{prop:SP=>WR}) and there exist thus at most two resources ranked last by an agent among the ones appearing in $\mu$. Call $\pi_{a_w}$ one such resource holds by agent $a_w$, and consider the triplet of resources $R = \tuple{\pi_{a_{w - 1}}, \pi_{a_{w}}, \pi_{a_{w + 1}}}$. From \eqref{eq:proofSwapPOLine1}, \eqref{eq:proofSwapPOLine2} and \eqref{eq:proofSwapPOLine3} we get:
	$$\begin{array}{cl}
	a_w: & \pi_{a_{w - 1}} \succ_w \heldc{\pi_{a_w}} \succ_w \pi_{a_{w + 1}}, \\
	a_{w + 1}: & \pi_{a_{w}} \succ_{w + 1} \heldc{\pi_{a_{w + 1}}} \succ_{w + 1} \pi_{a_{w - 1}}.
	\end{array}$$
	Hence when restricting preferences to $R$, for every resource in $R$, there exists an agent ranking it last among the resources in $R$. This violates the worst-restrictedness condition for the single-peaked profile and sets the contradiction.
	
	Overall we have proved that no improving deal exists in $\pi$ which entails that $\pi$ is stable with respect to $C_n$. $\qed$
\end{proof}

This theorem states that the $C_k$-stability hierarchy collapses at the $C_2$ level in single-peaked housing markets. Remember that every Pareto-optimal allocation is stable with respect to $C_n$. This result provides then a new characterization of Pareto-optimality in our setting.

\begin{corollary}
	In a single-peaked housing market, an allocation $\pi$ is Pareto-optimal if and only if it is stable with respect to $C_2$.
\end{corollary}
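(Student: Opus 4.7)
The plan is to prove the two directions separately, invoking Theorem \ref{theo:PO} together with the two short remarks made right after the definition of $C_k$-stability.

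For the forward direction (Pareto-optimal $\Rightarrow$ $C_2$-stable), I would simply chain the two observations already recorded in Section \ref{subsec:deals}: every Pareto-optimal allocation is stable with respect to $C_n$, and $C_k$-stability implies $C_{k'}$-stability for every $k' \leq k$. Specializing to $k' = 2$ yields the implication. Note that this direction uses neither single-peakedness nor Theorem \ref{theo:PO}; it holds in any house-market with strict preferences.

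For the backward direction ($C_2$-stable $\Rightarrow$ Pareto-optimal), Theorem \ref{theo:PO} does almost all of the work: in a single-peaked instance, stability with respect to $C_2$ upgrades to stability with respect to $C_n$. It remains to verify that $C_n$-stability implies Pareto-optimality, which I would establish by contraposition. Suppose $\pi$ is Pareto-dominated by some $\pi'$. The map $a_i \mapsto a_j$ with $\pi_{a_j} = \pi'_{a_i}$ is a permutation of $\agentSet$, which decomposes into disjoint cycles in the standard way. Since preferences are strict, any agent whose allocated resource actually changes between $\pi$ and $\pi'$ satisfies $\pi'_{a_i} \succ_i \pi_{a_i}$ (weak preference plus distinctness gives strict preference). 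Thus every nontrivial cycle of the decomposition is an improving cycle-deal applicable at $\pi$. Because $\pi' \neq \pi$ at least one nontrivial cycle exists, its size is at most $n$, and so $\pi$ is not stable with respect to $C_n$, giving the desired contrapositive.

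The only step not already spelled out in the excerpt is the converse of the observation ``every Pareto-optimal allocation is $C_n$-stable'', and this is the place where a small argument is needed; I do not foresee any real obstacle since the cycle decomposition argument above is standard and leans only on strictness of preferences.
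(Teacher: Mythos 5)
Your proof is correct and follows the same route the paper intends: the forward direction is the recorded observation that Pareto-optimal allocations are $C_n$-stable together with the downward monotonicity of $C_k$-stability, and the backward direction is Theorem \ref{theo:PO} plus the cycle-decomposition argument, which the paper itself invokes (via Shapley--Scarf) in Section \ref{subsec:deals}. You correctly identified that the only piece not explicitly recorded is ``$C_n$-stable $\Rightarrow$ Pareto-optimal'', and your strictness-plus-permutation-cycles argument for it is exactly the standard one.
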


Stating this result in terms of stability with respect to $C_n$ and not just Pareto-optimality gives us more flexibility. Indeed, in the more general setting where there are more resources to allocate than the number of agents, the result of Theorem \ref{theo:PO} still holds \citep{beynier2018efficiency} but it is no longer the case that Pareto-optimality implies $C_n$-stability.

\medskip

As we have proven that the allocation reached by swap-deals is Pareto-optimal, a natural question is then whether \emph{every} allocation that Pareto-dominates the initial allocation can be reached by swap-deals. It is not the case. 

\begin{proposition}
	\label{prop:notallPOreachable}
	There exists an instance $I = \tuple{\agentSet, \resourceSet, L, \pi^0}$ for which there is an allocation $\pi$ that Pareto-dominates $\pi^0$ and that can not be reached by a sequence of improving swap-deals.
\end{proposition}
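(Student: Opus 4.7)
My plan is to prove the proposition by exhibiting an explicit single-peaked counterexample with four agents and four resources. I would take the axis $r_1 \lhd r_2 \lhd r_3 \lhd r_4$ and the preferences
\begin{align*}
a_1 &\colon r_2 \succ_1 r_3 \succ_1 r_1 \succ_1 r_4, \\
a_2 &\colon r_3 \succ_2 r_2 \succ_2 r_4 \succ_2 r_1, \\
a_3 &\colon r_1 \succ_3 r_2 \succ_3 r_3 \succ_3 r_4, \\
a_4 &\colon r_4 \succ_4 r_3 \succ_4 r_2 \succ_4 r_1,
\end{align*}
each single-peaked on the axis (with peaks $r_2, r_3, r_1, r_4$ respectively), and I would set $\pi^0 = \tuple{r_1, r_4, r_2, r_3}$ and take as target allocation $\pi = \tuple{r_3, r_2, r_1, r_4}$.

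First I would verify that $\pi$ strictly Pareto-dominates $\pi^0$: indeed $r_3 \succ_1 r_1$, $r_2 \succ_2 r_4$, $r_1 \succ_3 r_2$, and $r_4 \succ_4 r_3$. Then I would enumerate the six possible swap-deals from $\pi^0$ and observe that exactly two of them are improving, namely $(a_1, a_3)$ and $(a_2, a_4)$; each of these simultaneously matches two agents to their peak, whereas every other swap demotes at least one of the two participants. Tracing the reachability graph, performing $(a_1, a_3)$ first yields $\tuple{r_2, r_4, r_1, r_3}$, after which the only remaining improving swap is $(a_2, a_4)$, and symmetrically performing $(a_2, a_4)$ first yields $\tuple{r_1, r_3, r_2, r_4}$, after which the only remaining improving swap is $(a_1, a_3)$. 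In both cases the process terminates at the swap-stable allocation $\tuple{r_2, r_3, r_1, r_4}$, in which every agent holds her peak. Consequently the full reachability cone of $\pi^0$ under improving swap-deals consists of exactly the four allocations $\pi^0$, $\tuple{r_2, r_4, r_1, r_3}$, $\tuple{r_1, r_3, r_2, r_4}$, and $\tuple{r_2, r_3, r_1, r_4}$, none of which is $\pi$.

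The main conceptual difficulty I anticipate is simply finding such an instance: intuitively one might suspect that every Pareto-dominating allocation in a single-peaked market is swap-reachable, since Pareto-dominance always forces at least one improving swap to be available at $\pi^0$. The reason the example works is that $\pi$ corresponds to a genuine 4-cycle in the permutation from $\pi^0$ to $\pi$, whereas the single-peaked preferences at $\pi^0$ only expose two disjoint peak-matching transpositions; once the dynamics commits to either of these transpositions, it can never assemble the 4-cycle needed to reach $\pi$.
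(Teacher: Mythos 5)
Your proof is correct as far as the literal statement goes, and it follows the same basic strategy as the paper: exhibit an explicit single-peaked instance and exhaustively trace the reachability graph of improving swaps from $\pi^0$. Your verification of single-peakedness, of Pareto-dominance, and of the fact that only the two transpositions $(a_1,a_3)$ and $(a_2,a_4)$ are ever available is accurate, so $\pi=\tuple{r_3,r_2,r_1,r_4}$ is indeed unreachable. The one substantive difference is the quality of the witness. The paper uses a three-agent instance ($\pi^0=\tuple{r_3,r_2,r_1}$ with the target $\pi=\tuple{r_2,r_1,r_3}$ realizing a 3-cycle) in which the unreachable allocation is \emph{Pareto-optimal}; that stronger property is what the paper actually relies on elsewhere (the abstract and Figure \ref{fig:summary-swaps} assert that some Pareto-optimal allocations are not swap-reachable). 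Your target allocation is not Pareto-optimal: from $\tuple{r_3,r_2,r_1,r_4}$ the swap $(a_1,a_2)$ improves both participants, and indeed your $\pi$ is itself Pareto-dominated by the terminal allocation $\tuple{r_2,r_3,r_1,r_4}$ that the dynamics does reach. So your example shows only that the dynamics can ``overshoot'' a dominated intermediate allocation, which is a weaker and less surprising separation. To match the force of the paper's result, you should either switch to an instance where the unreachable dominating allocation is Pareto-optimal, or note explicitly that your construction proves the proposition as literally stated but not the stronger claim the authors draw from it.
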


\begin{proof}
	Let us consider the following instance with three agents where the initial allocation $\pi^0 = \tuple{r_3, r_2, r_1}$ is the one underlined.
	$$\begin{array}{cccccc}
	a_1: & r_1 & \succ_1 & \held{r_2} & \succ_1 & \heldc{r_3} \\
	a_2: & \held{r_1} & \succ_2 & \heldc{r_2} & \succ_2 & r_3 \\
	a_3: & r_2 & \succ_3 & \held{r_3} & \succ_3 & \heldc{r_1}
	\end{array}$$
	The squared allocation $\held{\pi} = \tuple{r_2, r_1, r_3}$ is Pareto-optimal. However from $\pi^0$ only two deals are possible: $\mu_1 = \tuple{a_1, a_3}$ that reaches allocation $\pi'=\tuple{r_1, r_2, r_3}$, or $\mu_2 = \tuple{a_2, a_3}$ that leads to $\pi''=\tuple{r_3, r_1, r_2}$. No sequence of improving swap-deals can thus reach $\pi$. 
\end{proof}

It is however interesting to note that both the allocation returned by TTC and by  Crawler can always be reached through improving swap-deals.

\begin{proposition}
	\label{prop:ttc-is-swap-reachable}
	Let $I = \tuple{\agentSet, \resourceSet, L, \pi^0}$ be an instance and let $\pi^{TTC}$ be the allocation returned by the TTC algorithm on $I$. Then $\pi^{TTC}$ is reachable by swap-deals from $\pi^0$. 
\end{proposition}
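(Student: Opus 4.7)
The plan is to simulate TTC iteration by iteration using improving swap-deals. TTC processes its input in rounds, each of which selects a cycle in its current top-trading graph and reassigns the agents involved to their top remaining resources. By Observation~\ref{obs:restricted-sp}, single-peakedness is preserved when restricting to the resources remaining after each round, so every round takes place in a single-peaked subinstance. It therefore suffices to prove the following lemma: \emph{any TTC cycle $C$ of size $k \geq 2$ in a single-peaked instance can be implemented as a sequence of at most $k - 1$ improving swap-deals among the agents of $C$}.

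I would prove the lemma by strong induction on $k$. The base case $k = 2$ is immediate: the 2-cycle is itself a single swap, and it is improving for both agents since each receives her top. For the inductive step with $k \geq 3$, the first move is to establish the existence of \emph{some} improving swap between two agents of $C$. Restrict the instance to the agents and resources of $C$: by Observation~\ref{obs:restricted-sp} this restricted subinstance is again single-peaked. The restricted current allocation is strictly Pareto-dominated by the outcome of $C$ (each agent of $C$ strictly improves by receiving her top), so it is not Pareto-optimal. Theorem~\ref{theo:PO} states that on any single-peaked instance every swap-stable allocation is Pareto-optimal; contrapositively, the restricted current allocation cannot be swap-stable, so an improving swap $(a, a')$ between two agents of $C$ exists, and such a swap is also improving in the full instance.

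The second step is bookkeeping: apply any such improving swap and analyse how the TTC pointer structure of $C$ re-configures. Each agent's top resource is unchanged, but the holdings of $a$ and $a'$ are exchanged, so the pointers into $a$'s and $a'$'s former resources are rerouted to the other swapper. A direct case analysis on the cycle positions of $a$ and $a'$ shows that $C$ decomposes into either (i)~one singleton, consisting of the agent whose top was the partner's resource and is now satisfied, together with one cycle of size $k - 1$, when $a$ and $a'$ are cycle-adjacent; or (ii)~two cycles of sizes $k_1, k_2 \geq 2$ with $k_1 + k_2 = k$, when $a$ and $a'$ are not cycle-adjacent. In both cases every agent in a remaining component still points to another agent of her component (the one holding her unchanged top), so each component is a valid TTC cycle in the post-swap state, and the corresponding restricted subinstance remains single-peaked. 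Applying the induction hypothesis to each sub-cycle of size $k_i$ contributes $k_i - 1$ swaps, for a total of $1 + \sum_i (k_i - 1) = k - 1$ improving swaps.

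The main conceptual step — and where the single-peaked assumption is essential — is the invocation of Theorem~\ref{theo:PO} to guarantee an initial improving swap inside $C$; in domains where swap-stability does not imply Pareto-optimality, a TTC cycle need not admit any improving swap between its own agents and the argument breaks down (as can be witnessed by the classical Condorcet-style 3-agent profile, which is not single-peaked). The main technical obstacle in carrying out the proof is the bookkeeping step: one must verify carefully that after the swap the pointer structure of $C$ really does decompose into valid smaller TTC cycles on the post-swap allocation, so that the inductive hypothesis applies to each piece. Once the lemma is established, the proposition follows by composing the per-cycle simulations across all TTC iterations, yielding a sequence of improving swaps from $\pi^0$ to $\pi^{TTC}$.
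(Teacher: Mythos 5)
Your proof is correct and rests on the same two pillars as the paper's: simulate TTC cycle by cycle, restrict to the agents and resources of the current cycle (Observation~\ref{obs:restricted-sp} keeps the subinstance single-peaked), and invoke Theorem~\ref{theo:PO} to produce improving swaps inside the cycle. Where you diverge is in how the argument is finished. The paper observes that in the restricted subinstance the all-tops allocation $\pi^*$ is feasible and is the \emph{unique} Pareto-optimal allocation (it Pareto-dominates everything else); since any maximal sequence of improving swaps terminates at a swap-stable, hence Pareto-optimal, allocation, that sequence must end at $\pi^*$ — no analysis of which swap is taken or of how the cycle reconfigures is needed. You instead run a strong induction on the cycle length, using Theorem~\ref{theo:PO} only to get one improving swap, and then track how the TTC pointer structure decomposes into a singleton plus a $(k-1)$-cycle or into two smaller cycles. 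Your bookkeeping checks out (adjacency of the swappers is exactly what separates the two cases, and in the non-adjacent case both pieces have size at least $2$), and it buys you something the paper's proof does not state: an explicit bound of $k-1$ improving swaps per TTC cycle of length $k$, hence at most $n-1$ swaps overall. The price is the extra case analysis, which the paper's ``unique Pareto optimum'' shortcut renders unnecessary if one only wants reachability.
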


\begin{proof} 
	We show that any cycle-deal applied  by TTC can be implemented as a sequence of improving swap-deals. For a given step of the algorithm, we consider $R$ the set of resources and $N$ the set of agents remaining at this step and $\pi$ the current partial allocation.
	
	Let $\mu$ be the next cycle-deal to be applied, and let $R_{\mu}$ and $N_{\mu}$ be respectively the set of resources and the set of agents involved in $\mu$. We must prove that the allocation $\pi[\mu]$ can be reached by a sequence of swap-deals.	When $|R_{\mu}| = 2$, $\mu$ is actually a swap-deal and the claim is trivially true. 
	
	Assume then that $|R_{\mu}|>2$, let us consider the instance $I'$ obtained from $I$ by restricting $\resourceSet$ to $R_\mu$ and $\agentSet$ to $N_\mu$. In this restricted instance, the allocation $\pi^*$ where each agent receives her top resource is feasible. Indeed, it is the allocation obtained by applying $\mu$ (by definition of $\mu$ in TTC). This allocation is trivially Pareto-optimal as every agent has her top resources and it thus Pareto-dominates every other allocation. It is then the only Pareto-optimal allocation in $I'$. Remember from Observation \ref{obs:restricted-sp} that $I'$ is single-peaked. Then, by virtue of Theorem \ref{theo:PO}, this implies that $\pi^*$ must be reachable by improving swap-deals in $I'$. Overall, $\pi_{\mu}$ is then reachable by improving swap-deals in $I$. 
	
	The same argument can be stated for any step of the algorithm. By concatenating the sequences of improving swap-deals for every step of the algorithm, we then obtain a sequence of improving swap-deals leading to $\pi^{TTC}$ from $\pi^0$. \qed
\end{proof}

\begin{proposition}
	\label{prop:crawler-is-swap-reachable}
	Let $I = \tuple{\agentSet, \resourceSet, L, \pi^0}$ be an instance and let $\pi^C$ be the allocation returned by  Crawler. Then $\pi^C$ is reachable by swap-deals from $\pi^0$. 
\end{proposition}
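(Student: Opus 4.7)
The plan is to mirror the execution of the Crawler iteration by iteration, replacing each iteration by a short sequence of improving swap-deals; concatenating these sequences across all iterations will produce the desired improving swap-path from $\pi^0$ to $\pi^C$.

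First I would separate the two kinds of iterations. In a \emph{pick} iteration, an agent already holds her top resource in the current set $R$ and is simply matched; no swap is required and the rest of the allocation is unchanged. In a \emph{crawl} iteration, the first agent (scanning left to right) whose peak lies weakly to her left triggers a cycle: denoting by $t$ her position in the current list and by $t^* < t$ the position of her peak $r_{t^*}$, agent $a_t$ receives $r_{t^*}$, and every agent $a_i$ with $i \in \{t^*, \ldots, t-1\}$ receives the resource formerly held by her right-hand neighbor. My claim is that this cycle can be simulated by the $t - t^*$ successive swap-deals in which $a_t$ swaps with $a_{t-1}$, then with $a_{t-2}$, and so on down to $a_{t^*}$.

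The hard part will be verifying that every such swap is improving for both participants. Consider the $k$-th swap ($1 \leq k \leq t - t^*$): at this point $a_t$ holds $r_{t-k+1}$ and $a_{t-k}$ holds $r_{t-k}$. By Observation \ref{obs:restricted-sp}, the preferences restricted to the current $R$ are still single-peaked with respect to the induced axis. For $a_t$, both $r_{t-k+1}$ and $r_{t-k}$ sit at or to the right of her peak $r_{t^*}$ on the axis, with $r_{t-k}$ strictly closer (or equal to the peak when $k = t-t^*$); single-peakedness thus yields $r_{t-k} \succ_t r_{t-k+1}$. For $a_{t-k}$, the fact that the Crawler's scan passed over her in this iteration forces her peak in $R$ to lie at some position $p \geq t-k+1$, so $r_{t-k+1}$ sits between $r_{t-k}$ and her peak on the axis (or coincides with it); single-peakedness thus yields $r_{t-k+1} \succ_{t-k} r_{t-k}$.

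After performing these $t-t^*$ swaps, $a_t$ holds $r_{t^*}$ and each $a_i$ with $t^* \leq i \leq t-1$ holds $r_{i+1}$, which is exactly the outcome of the Crawler iteration. To finish, I would observe that once an agent has been matched she holds her top among the then-current $R$ and therefore strictly prefers it to any resource still available in subsequent iterations; this both justifies treating her as inert in later iterations and is consistent with the fact that our construction simply never involves her in any subsequent swap. Concatenating the swap sequences produced for each iteration yields an improving swap sequence transforming $\pi^0$ into $\pi^C$, as required.
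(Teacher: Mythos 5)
Your proposal is correct and follows essentially the same route as the paper's proof: it decomposes each crawl iteration into successive swaps using the crawling agent as a hub (swapping with $a_{t-1}$, then $a_{t-2}$, down to $a_{t^*}$), and justifies that each swap is improving by the same two facts the paper uses — the hub's peak lies weakly to the left of everything she temporarily holds, and each passed-over agent prefers her right neighbour's resource because her peak is to her right. Your explicit handling of pick iterations and of matched agents being inert is a minor presentational addition, not a different argument.
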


\begin{proof}
	We show that every cycle-deal applied by  Crawler can be implemented as a sequence of improving swap-deals. For clarity reasons and without loss of generality, we assume that each agent $a_j$ currently holds resource $r_j$. 
	
	For a given iteration of the algorithm, consider agent $a_i$ who picks resource $r_k$ currently held by agent $a_k$. From the definition of the procedure, $a_k$ is on the left of $a_i$ (with respect to the single-peaked axis) and $a_k$ has already been considered at this step before considering $a_i$. In fact, all the agents between $a_k$ (included) and $a_i$ (excluded) on the left of the single-peaked axis have already been considered at the current iteration before reaching $a_i$. Moreover, all these agents have passed their turn because their peak is on their right. In other words, each agent $a_j$  between $a_k$ (included) and $a_i$ (excluded)  prefers the resource held by the agent $a_{j+1}$ on her right, that is:
	$$r_{j+1} \succ_j r_j, \forall j \in \{k, \ldots i - 1 \}.$$ 
	
	Let $\mu = \langle a_i, a_{i-1}, \cdots, a_{k+1}, a_k \rangle$ be the cycle-deal implemented by  Crawler at the current step. In this deal, $a_k$  gives her resource $r_k$ to $a_i$ and all the other agents of the deal give their resource to the next agent in the sequence which is the agent on their left with respect to the single-peaked axis. The decomposition of $\mu$ into a sequence of swap-deals consists in using agent $a_i$ as a hub for the swap-deals. Agent $a_i$ first exchanges with $a_{i-1}$ then, $a_{i}$ exchanges with $a_{i-2}$ and so on until $a_{i}$ performs a swap-deal with $a_{k}$. At the end, $a_i$ holds $r_k$ and each other agent $a_j$ in $\mu_i$ holds the resource initially held by $a_{j+1}$. The sequence of swap-deals is thus equivalent to $\mu_i$. 
	
	We now show that all these swap-deals are improving. In the first deal $\langle a_i, a_{i-1} \rangle$, $a_{i-1}$ receives the resource $r_i$ held by $a_i$ that she prefers to her current resource $r_{i-1}$ (as shown previously $r_{i} \succ_{i-1} r_{i-1}$). Simultaneously, agent $a_{i}$ receives the resource $r_{i-1}$ held by $a_{i-1}$ that she prefers to her current resource since her peak is on the left of $a_{i-1}$ (it is held by $a_k$), that is:
	$$r_{j} \succ_i r_{j+1}, \forall j \in  \{k, \ldots, i-1  \}.$$ The first deal is thus mutually improving.
	Concerning the next  swap-deal $\langle a_i, a_{j}\rangle$ with $j \in \{k, \ldots, i-2  \}$, $a_i$ exchanges  $r_{j+1}$ that she obtained from her previous swap-deal, against $r_j$ held by $a_{j}$. Agent $a_{j}$ receives the resource $r_{j+1}$ that she prefers to her current resource $r_{j}$ (as shown previously $r_{j+1} \succ_j r_j$). $a_{i}$ receives the resource $r_{j}$ that she prefers to $r_{j+1}$ (since $\forall j \in  \{k, \ldots, i-1\}, r_{j} \succ_i r_{j+1}$). All the swap-deals are thus mutually improving. $\qed$
\end{proof}

\medskip

These two results show that both the outcome of TTC and Crawler could be implemented as sequences of mutually beneficial swap-deals in single-peaked domains. 
This attenuates a bit a critical feature of these procedures. However, even though these long trading cycles can be broken down into swap-deals, they would still need to be pre-computed, and carefully coordinated. 

The proof of Proposition \ref{prop:crawler-is-swap-reachable} shows how to compute the set of swap-deals that achieves  Crawler outcome: for each cycle-deals $\mu = \langle a_i, a_{i-1}, \cdots, a_{k+1}, a_k \rangle$ implemented, $a_i$ is used as a hub to decompose the deal into mutually beneficial swap-deals. 
On the contrary, the proof of Proposition \ref{prop:ttc-is-swap-reachable} does not provide explicitly the corresponding sequence of deals. 
We show now how to compute the set of swap-deals that achieves the TTC outcome using Crawler. Note that for each cycle-deal selected by TTC, we can build a sub-instance of the initial instance and  apply Crawler on this sub-instance to  obtain the set of swap-deals that implements the cycle-deal. 
Since Crawler is Pareto-optimal, the following observation is straightforward: 

\begin{observation} \label{obs:outcomeCrawler}
	In a single-peaked housing market, if the allocation where each agent has her top resource is feasible (i.e. each agent has a different top resource), then the allocation is returned by  Crawler.
\end{observation}

We can now compute the set of swap-deals that achieves the TTC outcome using  Crawler.
For each deal $\mu$ implemented by TTC:
\begin{itemize}
	\item  If  $|\mu| \leq 2$, it is already a swap-deal and we are done. 
	\item If $|\mu|  > 2$, the deal has to be decomposed into a set of swap-deals. Let $I'$ be the sub-instance  restricted to the agents and the resources involved in $\mu$. This instance is guaranteed to remain  single-peaked (see Observation \ref{obs:subSet}). By definition of TTC, in $I'$, each agent obtains her top resource.  By Observation \ref{obs:outcomeCrawler}, the  Crawler outcome will be the same as the outcome of $\mu$ when applying Crawler to $I'$. The swap-deal decomposition of $\mu$ is then obtained by applying  Crawler on $I'$ and decomposing each cycle-deal $\mu'$ of  Crawler as explained above.  
\end{itemize}

Again, it should be kept in mind that in practice, 
a coordination mechanism would have to ensure that the agents indeed  execute the correct sequence of swap-deals (and thus the desired allocation would be obtained).

\medskip

Figure \ref{fig:summary-swaps} concludes this section by summarizing the different findings related the properties of swap dynamics. 

\begin{figure}[t]
	\centering
	\begin{tikzpicture}[scale = 0.8]
	\draw [rotate=30, fill={rgb:red,1;green,1;blue,1;white,50}] (0, 0) ellipse (4cm and 1.8cm);
	\node [align = center, font=\scriptsize\linespread{0.8}\selectfont] at (-1.6, -1.5) {\textbf{Allocations}\\\textbf{Pareto-dominating $\pi^0$}};
	
	\draw [rotate=30, fill={rgb:red,1;green,5;blue,8;white,80}] (1.2, 0) ellipse (2.3cm and 1.2cm);
	\node [align = center, font=\scriptsize\linespread{0.8}\selectfont] at (0.5, 0.25)  {\textbf{Swap-stable} \\ \textbf{allocations} \\ \textbf{reachable from $\pi^0$}};
	
	\node [align = center, font=\scriptsize\linespread{0.8}\selectfont] at (2.2, 0.7) {TTC \\ \textbullet};
	\node [align = center, font=\scriptsize\linespread{0.8}\selectfont] at (1.5, 1.5) {Crawler \\ \textbullet};

	\end{tikzpicture}  
	\caption{Summary of the results of Theorem \ref{theo:PO} and Propositions \ref{prop:notallPOreachable}, \ref{prop:ttc-is-swap-reachable} and \ref{prop:crawler-is-swap-reachable} for a given initial allocation $\pi^0$. Remember from Example \ref{ex:firstEx} that  TTC  and  Crawler can return different allocations.}
	\label{fig:summary-swaps}
\end{figure}
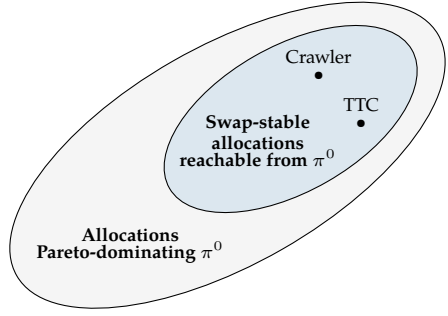

\subsection{Maximality of the single-peaked domain}

We now show that the single-peaked domain is maximal for the Pareto-optimality of swap dynamics: For every preference domain $D$ such that $\mathcal{SP}_\lhd$ is a proper subset of $D$ (for a given linear order $\lhd$) there exists an instance such that no swap dynamics can reach a Pareto-optimal allocation.

Before going through the proof, let us first start with a simple example. Consider the following profile and let $\lhd$ be the order $r_1 \lhd r_2 \lhd r_3$.
\begin{align*}
a_1: \enspace & r_1 \succ_{1} \heldc{r_2} \succ_{1} r_3 \\
a_2: \enspace & r_3 \succ_{2} \heldc{r_1} \succ_{2} r_2 \\
a_3: \enspace & r_2 \succ_{3} \heldc{r_3} \succ_{3} r_1
\end{align*}
It is clear that the profile is not single-peaked with respect to $\lhd$ (it actually is not for any order) as it is clearly not worst-restricted. Note that the preferences of agents $a_1$ and $a_3$ are single-peaked with respect to $\lhd$ but not those of agent $a_2$.

Take the underlined resources to form the initial allocation. It has been constructed as follows. Agents $a_1$ and $a_3$ receive their second best resource (respectively $r_2$ and $r_3$). For agent $a_2$, the violation of single-peakedness comes from her ranking of resources $r_1$ and $r_2$. She is thus allocated her most preferred resource among these two, i.e. $r_1$. It can easily be checked that this allocation is stable with respect to $C_2$ but is Pareto-dominated by $\tuple{r_1, r_3, r_2}$.

This construction is generalized to an arbitrary number of agents below.

\begin{theorem}
	\label{thm:MaxSPforSwap=>PO}
	Let $\resourceSet$ be a set of resources and $\lhd$ a linear order over $\resourceSet$. For every preference domain $D$ such that $\mathcal{SP}_\lhd \subsetneq D$, there exists an instance $I = \tuple{\agentSet, \resourceSet, L, \pi^0}$ defined over $D$ such that swap dynamics on $I$ do not reach a Pareto-optimal allocation. 
\end{theorem}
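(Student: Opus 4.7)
The plan is to mirror the analysis carried out in the proof of Theorem~\ref{theo:PO}, where the key insight was that a Condorcet-style 3-cycle on the resource ranks is precisely what a single-peaked profile rules out via worst-restrictedness. The strategy is therefore to exploit a non-SP preference from $D \setminus \mathcal{SP}_\lhd$ to build a small instance whose initial allocation admits an improving cycle-deal of length~3 but no improving swap-deal; such an initial allocation is a swap-stable fixed point of the swap-deal procedure that is not Pareto-optimal.

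Since $\mathcal{SP}_\lhd \subsetneq D$, pick any $\succ^{*} \in D \setminus \mathcal{SP}_\lhd$. By the characterization of single-peakedness (which is essentially the single-preference version of Proposition~\ref{prop:SP=>WR}: $\succ$ is SP w.r.t. $\lhd$ iff, for every triple $r_x \lhd r_y \lhd r_z$, the middle element $r_y$ is not $\succ$-last), there exist $r_x \lhd r_y \lhd r_z$ such that $r_y$ is ranked last by $\succ^{*}$ among the three. Up to swapping the labels of $r_x$ and $r_z$, assume $r_x \succ^{*} r_z \succ^{*} r_y$. Now construct an instance with $n = |\resourceSet|$ agents $a_1, \ldots, a_n$. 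Set $\succ_3 = \succ^{*}$; let $\succ_1 \in \mathcal{SP}_\lhd$ be any SP extension with peak $r_z$ (so $\succ_1$ restricted to the triple is $r_z \succ r_y \succ r_x$), and $\succ_2 \in \mathcal{SP}_\lhd$ any SP extension with peak $r_y$ satisfying $r_x \succ_2 r_z$ (one is free to break ties between the two immediate neighbours of the peak). Enumerate $\resourceSet \setminus \{r_x, r_y, r_z\} = \{r_4, \ldots, r_n\}$ and give each $a_i$, $i \geq 4$, an SP preference with $r_i$ on top. Finally, take $\pi^0_{a_1} = r_y$, $\pi^0_{a_2} = r_x$, $\pi^0_{a_3} = r_z$, and $\pi^0_{a_i} = r_i$ for $i \geq 4$. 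All preferences lie in $\mathcal{SP}_\lhd \cup \{\succ^{*}\} \subseteq D$, so this is a valid instance over $D$.

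The verification is then two easy checks. The cycle-deal $\mu = \tuple{a_1, a_2, a_3}$ delivers $r_z$ to $a_1$, $r_y$ to $a_2$ and $r_x$ to $a_3$, all strict improvements on the triple by construction; hence $\pi^0$ is Pareto-dominated by $\pi^0[\mu]$. No improving swap exists, however: each $a_i$ with $i \geq 4$ already holds her peak and cannot improve, while the three pairwise swaps among $\{a_1, a_2, a_3\}$ each fail because one of the two agents strictly prefers her current resource on the triple — for the pair $\{a_1, a_2\}$ because $r_y \succ_1 r_x$, for $\{a_2, a_3\}$ because $r_x \succ_2 r_z$, and for $\{a_1, a_3\}$ precisely because $r_z \succ^{*} r_y$, i.e.\ because $r_y$ is $\succ^{*}$-last among the triple. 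The swap-deal procedure thus terminates at $\pi^0$, which is not Pareto-optimal. The only subtle step is the first one — extracting the ``bad triple'' from $\succ^{*}$ and lining up the Condorcet pattern in the correct orientation — whereas the construction of the remaining SP preferences and the stability check are routine.
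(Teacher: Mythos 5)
Your proof is correct, but it takes a genuinely different route from the paper's. You build a compact three-agent gadget: extract from $\succ^{*}$ an axis-triple $r_x \lhd r_y \lhd r_z$ (up to reflection) whose middle element is ranked last, surround it with two single-peaked orders peaked at $r_z$ and $r_y$, and obtain an initial allocation that admits an improving $3$-cycle but no improving swap; all remaining agents hold their peaks and are inert. The paper instead anchors its construction at the peak of $\succ^{*}$ and an \emph{adjacent} violating pair $(r_{s-1},r_s)$, and threads a chain of agents holding their second-best resources between the two; its non-Pareto-optimality certificate is a long trading cycle in which every agent reaches her top. Your gadget is more economical and is the exact dual of the proof of Theorem~\ref{theo:PO}: that proof shows single-peakedness forbids a swap-stable Condorcet-style $3$-cycle on any triple, and you show that adjoining any non-single-peaked order lets you realize one. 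Two small points to tighten. First, the characterization you invoke (a single order is single-peaked w.r.t.\ $\lhd$ iff no axis-middle element of a triple is ranked last among the three) is true but is not literally Proposition~\ref{prop:SP=>WR}, which is stated for profiles and only in one direction; you need the converse for a single order, so state and prove it -- it is two lines: a violation $top(\succ^{*}) \lhd r_1 \lhd r_2$ with $r_2 \succ^{*} r_1$ yields the triple $(top(\succ^{*}), r_1, r_2)$ with the middle element last. Second, your parenthetical about breaking ties "between the two immediate neighbours of the peak" is slightly off, since $r_x$ and $r_z$ need not be adjacent to $r_y$ on the axis; what you actually need, and what holds, is that a single-peaked order with peak $r_y$ may rank any resource left of the peak above any resource right of it (descend one side of the axis entirely before the other), which gives the required $r_x \succ_2 r_z$.
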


\begin{proof}
	Let us construct an instance $I = \tuple{\agentSet, \resourceSet, L, \pi^0}$ defined over $D$ such that swap dynamics on $I$ do not reach a Pareto-optimal allocation. 
	
	Without loss of generality and for the ease of the reader, let us assume that $\lhd$ is such that $r_1 \lhd r_2 \lhd \ldots \lhd r_n$.
	
	Since $\mathcal{SP}_\lhd \subsetneq D$, there exists a linear order $\succ^* \in D$ that is not single-peaked with respect to $\lhd$.  There exist then two resources $(r_{s - 1}, r_s) \in \resourceSet^2$ such that:
	$$\left.\begin{array}{r}
	r_{s} \lhd r_{s - 1} \lhd top(\succ^*), \\ 
	\text{or, } \enspace top(\succ^*) \lhd r_{s - 1} \lhd r_s
	\end{array}\right\} \text{ and } r_s \succ^*_i r_{s - 1}.$$
	Without loss of generality, let us assume that $top(\succ^*) \lhd r_{s - 1} \lhd r_s$. We will use index $t$ to refer to the top ranked resource in $\succ^*$, i.e., $r_t = top(\succ^*)$.
	
	Moreover, as $\mathcal{SP}_\lhd \subset D$, for every resource $r_i \in \resourceSet$ there exist in $D$ two linear orders $\succ_i^1$ and $\succ_i^2$ that are single-peaked with respect to $\lhd$ such that:
	$$top(\succ_i^1) = top(\succ_i^2) = r_i \text{ and } snd(\succ_i^2) = r_{i - 1}.$$
	Thus, in the preference order $\succ_i^1$ the top resource is $r_i$ and the other resources are not constrained. For $\succ_i^2$, the top resource is $r_i$ and the second best must be $r_{i - 1}$.
	
	We introduce the preference profile $L = \{\succ_{_i} \mid a_i \in \agentSet\}$ defined as follows:
	\begin{align*}
	&\succ_{_1} = \succ^*, \\
	&\succ_{_i} = \succ_{i - 1}^1, \quad \forall a_i \in \agentSet, i \in \{2, \ldots, t \}, \\
	&\succ_{_i} = \succ_{i}^2, \quad \forall a_i \in \agentSet, i \in \{t + 1, \ldots, s \}, \\
	&\succ_{_i} = \succ_{i}^1, \quad \forall a_i \in \agentSet, i \in \{s + 1, \ldots, n \}. 
	\end{align*}
	
	The initial allocation $\pi^0$ is then defined as:
	\begin{align*}
	&\pi^0_{a_1} = r_s, \\
	&\pi^0_{a_i} = top(a_i) = r_{i - 1}, \quad \forall a_i \in \agentSet, i \in \{2, \ldots, t \}, \\
	&\pi^0_{a_i} = snd(a_i) = r_{i - 1}, \quad \forall a_i \in \agentSet, i \in \{t + 1, \ldots, s \}, \\
	&\pi^0_{a_i} = top(a_i) = r_i, \quad \forall a_i \in \agentSet, i \in \{s + 1, \ldots, n \}.
	\end{align*}
	
	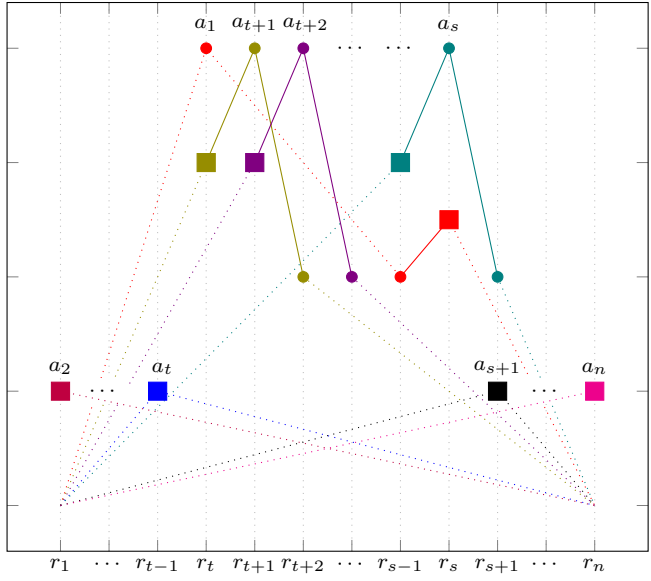
\begin{figure}
		\centering
		\begin{tikzpicture}
		\begin{axis}[
		scale only axis,
		legend pos = south east,
		xtick = {1, 3, 5, 7, 9, 11, 13, 15, 17, 19, 21, 23},
		xticklabels = {$r_1$, $\cdots$, $r_{t - 1}$, $r_{t}$, $r_{t + 1}$, $r_{t + 2}$, $\cdots$, $r_{s - 1}$, $r_{s}$, $r_{s + 1}$, $\cdots$, $r_n$},
		yticklabels = {,,,,,,,},
		xmajorgrids,
		major grid style={dotted}]
		
		\addplot[mark = square*, blue, mark options = {scale = 1.7}] coordinates {(5, 2)};
		\node[] at (axis cs:5.2, 2.2) {$a_t$};
		\addplot[dotted, blue, forget plot] coordinates {(1, 1) (5, 2) (23, 1)};
		
		\addplot[mark = *, red, forget plot] coordinates {(7, 5)};
		\addplot[dotted, red, forget plot] coordinates {(1, 1) (7, 5) (15, 3)};
		\addplot[mark = *, red, forget plot] coordinates {(15, 3) (17, 3.5)};
		\addplot[dotted, red, forget plot] coordinates {(17, 3.5) (23, 1)};
		\addplot[mark = square*, red, mark options = {scale = 1.7}] coordinates {(17, 3.5)};
		\node[] at (axis cs:7, 5.2) {$a_1$};
		
		\addplot[dotted, olive, forget plot] coordinates {(1, 1) (7, 4)};
		\addplot[mark = *, olive, forget plot] coordinates {(7, 4) (9, 5) (11, 3)};
		\addplot[dotted, olive, forget plot] coordinates {(11, 3) (23, 1)};
		\addplot[mark = square*, olive, mark options = {scale = 1.7}] coordinates {(7, 4)};
		\node[] at (axis cs:9, 5.2) {$a_{t + 1}$};
		
		\addplot[dotted, violet, forget plot] coordinates {(1, 1) (9, 4)};
		\addplot[mark = *, violet, forget plot] coordinates {(9, 4) (11, 5) (13, 3)};
		\addplot[dotted, violet, forget plot] coordinates {(13, 3) (23, 1)};
		\addplot[mark = square*, violet, mark options = {scale = 1.7}] coordinates {(9, 4)};
		\node[] at (axis cs:11.1, 5.2) {$a_{t + 2}$};
		
		\node[] at (axis cs:13, 5) {$\cdots$};
		\node[] at (axis cs:15, 5) {$\cdots$};
		
		\addplot[dotted, teal, forget plot] coordinates {(1, 1) (15, 4)};
		\addplot[mark = *, teal, forget plot] coordinates {(15, 4) (17, 5) (19, 3)};
		\addplot[dotted, teal, forget plot] coordinates {(19, 3) (23, 1)};
		\addplot[mark = square*, teal, mark options = {scale = 1.7}] coordinates {(15, 4)};
		\node[] at (axis cs:17, 5.2) {$a_{s}$};
		
		\addplot[dotted, forget plot] coordinates {(1, 1) (19, 2) (23, 1)};
		\addplot[mark = square*, mark options = {scale = 1.7}] coordinates {(19, 2)};
		\node[] at (axis cs:19, 2.2) {$a_{s + 1}$};

		\node[] at (axis cs:2.8, 2) {\colorbox{white}{$\cdots$}};
		\addplot[mark = square*, purple, mark options = {scale = 1.7}] coordinates {(1, 2)};
		\node[] at (axis cs:1,2.2) {$a_2$};
		\addplot[dotted, mark=none, purple, forget plot] coordinates {(1, 2) (23, 1)};

		\node[] at (axis cs:21, 2) {\colorbox{white}{$\cdots$}};
		
		\addplot[mark = square*, magenta, mark options = {scale = 1.7}] coordinates {(23, 2)};
		\node[] at (axis cs:23, 2.2) {$a_{n}$};
		\addplot[dotted, mark = none, magenta, forget plot] coordinates {(1, 1) (23, 2)};
		
		\end{axis}
		\end{tikzpicture}
		\caption{The instance constructed in the proof of Theorem \ref{thm:MaxSPforSwap=>PO}. The dotted lines represent the trend of the utilities, dots fixed points and squares the resource allocated to each agent.}
		\label{fig:MaxSPforPO}
	\end{figure}
	
	To get a better understanding of the instance $I = \tuple{\agentSet, \resourceSet, L, \pi^0}$ constructed in this proof, Figure \ref{fig:MaxSPforPO} presents the preference profile $L$ and the initial allocation $\pi^0$.
	
	We claim that $\pi^0$ is stable with respect to $C_2$ but not Pareto-optimal. Allocation $\pi^0$ is clearly not Pareto-optimal as the allocation in which every agent receives her top resource is feasible--- because no two agents have the same top resource---and this allocation clearly Pareto-dominates $\pi^0$. Let us now show that $C_2(\pi^0) = \emptyset$.
	
	First observe that every agent $a_i$, $i \in \{2, \ldots, t \} \cup \{
	s + 1, \ldots, n \}$ owns her top resource, hence can not be involved in an improving swap-deal. 
	
	Consider now agent $a_i, i \in \{t + 1, \ldots, s - 1 \}$. She owns her second most preferred resource which she would only trade against her top resource that is owned by agent $a_{i + 1}$. However, agent $a_{i + 1}$ is not interested in $\pi_{a_i}$, hence no improving swap-deal is possible.
	
	Finally, let us consider agent $a_s$ who owns her second most preferred resource $r_{s - 1}$ and whose top resource $r_s$ is owned by agent $a_1$. By the hypothesis that $a_1$'s preferences are not single-peaked, we have $r_{s} \succ_1 r_{s - 1}$. Once again, there is no improving swap-deal involving agent $a_s$.
	
	Overall, we have proved that $C_2(\pi^0) = \emptyset$, hence any swap dynamic returns $\pi^0$ on $I$. As $\pi^0$ is Pareto-dominated, this concludes the proof. $\qed$
\end{proof}

This result is particularly interesting since it shows that the single-peaked domain captures in a ``tight'' way the domain under which swap dynamics are Pareto-optimal (in the vein of similar results obtained by \citet{chevaleyre2010simple} in different settings). 

Note that this is not a characterization result: there are some domains that are not single-peaked but for which swap dynamics return Pareto-optimal allocations.

\begin{example}
	Let us consider the following preference profile:
	\begin{align*}
	a_1: \enspace & r_1 \succ_{_1} r_2 \succ_{_1} r_3 \\
	a_2: \enspace & r_1 \succ_{_2} r_3 \succ_{_2} r_2 \\
	a_3: \enspace & r_3 \succ_{_3} r_2 \succ_{_3} r_1
	\end{align*}
	This profile is not single-peaked over any linear order $\lhd$: the triplet $\tuple{r_1, r_2, r_3}$ is a witness of the violation of the worst-restrictedness condition (Proposition \ref{prop:SP=>WR}), however we can show that for every initial allocation, swap dynamics return a Pareto-optimal allocation. Let us consider the different initial allocations:
	\begin{enumerate}
		\item $\pi^0 = \tuple{r_1, r_2, r_3}$: $C_2(\pi^0) = \emptyset$ and $\pi^0$ is Pareto-optimal.
		\item $\pi^0 = \tuple{r_1, r_3, r_2}$: $C_2(\pi^0) = \emptyset$ and $\pi^0$ is Pareto-optimal.
		\item $\pi^0 = \tuple{r_2, r_1, r_3}$: $C_2(\pi^0) = \emptyset$ and $\pi^0$ is Pareto-optimal.
		\item $\pi^0 = \tuple{r_2, r_3, r_1}$: two swap-deals are possible:
		\begin{enumerate}
			\item $\tuple{a_1, a_3}$ which leads to case 2.
			\item $\tuple{a_2, a_3}$ which leads to case 3.
		\end{enumerate}
		\item $\pi^0 = \tuple{r_3, r_1, r_2}$: one swap-deal is possible: $\tuple{a_1, a_3}$ which leads to case 3.
		\item $\pi^0 = \tuple{r_3, r_2, r_1}$: three swap-deals are possible: 
		\begin{enumerate}
			\item $\tuple{a_1, a_3}$ which leads to case 1.
			\item $\tuple{a_1, a_2}$ which leads to case 4.
			\item $\tuple{a_2, a_3}$ which leads to case 5.
		\end{enumerate}
	\end{enumerate}
\end{example}

This example does not contradict Theorem \ref{thm:MaxSPforSwap=>PO}: Swap dynamics are  Pareto-optimal on the domain $D = \{\succ_{_1}, \succ_{_2}, \succ_{_3}\}$ because $D$ does not include every single-peaked linear order over $\lhd$. For instance take $\lhd$ to be $r_1 \lhd r_2 \lhd r_3$, extend $D$ by adding the two following linear orders: $r_2 \succ r_1 \succ r_3$ and $r_2 \succ r_3 \succ r_1$ so that $\mathcal{SP}_\lhd \subset D'$ and Theorem \ref{thm:MaxSPforSwap=>PO} will apply.

\section{The ``price of'' swap dynamics}
\label{sec:pricesOf}

It is natural to now ask to what extent swap dynamics induce a cost in terms of social welfare. 
In that perspective we  will first discuss the ``standard''  \emph{price of anarchy} ($\mathit{PoA}$)  \citep{KoutsoupiasEtAl2009,anshelevich2013anarchy}, that is, the (worst-case, over all instances) ratio between the worst stable outcome and the social welfare optimum. More formally, for an allocation procedure $M$:

$$
\mathit{PoA}_v(M) = \max_{I \in \mathcal{I_D}} \frac{\max v(\pi)}{\min_{\pi \in M(I)} v(\pi)} \text{~,~with~} v \in \{ark, mrk\}.
$$


But, as we have discussed, swap dynamics constitute a family of procedures. 
In that context, a perhaps more relevant metric is the ratio between the best and worst stable outcome which can be obtained by such procedures. In particular, this would tell us the price to pay for not being guided by a central planner (in the selection of deals) towards maximizing our social welfare notion. We call this notion the \emph{price of dynamics} ($\mathit{PoD}$), and for a family of dynamics $\mathcal{M}$, we define it as follows:

$$
\mathit{PoD}_v(\mathcal{M}) = \max_{\substack{I \in \mathcal{I_D}\\ M,M' \in \mathcal{M}}} \frac{\max_{\pi \in M'(I)} v(\pi)}{\min_{\pi \in M(I)} v(\pi)} \text{~,~with~} v \in \{ark, mrk\}.
$$
Note that this notion is not specific to swap dynamics and can be applied to any family of dynamics. Also, observe that an upper bound on the $\mathit{PoA}$ also applies to the $\mathit{PoD}$. 

These definitions are parametrized by the social welfare notion considered. 
In terms of average rank, we recall that \citet{damamme2015power} established that the price of anarchy is 2 for swap dynamics \emph{in the general domain}. 
While the upper bound remains valid in our restricted domain, the instance exhibited to show this bound to be tight in \citep{damamme2015power} violates single-peakedness. We now show that this result still holds under our domain restriction.

\begin{proposition}
	\label{prop:poa-is-2-in-SP}
	For any $M \in \mathcal{M}_2$, $\mathit{PoA}_{ark}(M)$ is $2$ in the single-peaked domain.
\end{proposition}

\begin{proof}
	
	Let us consider the following single-peaked instance involving $n$ agents, and assume without loss of generality $n$ to be odd. Take for now the shaded allocation as the initial allocation.\footnote{The underlined allocation will only be relevant in forthcoming proofs.}
	
	\begin{center}
		\setlength{\tabcolsep}{2pt}
		\begin{tabular}{rccccccccccccc}
			$a_1$ : & \fcolorbox{black}{lightgray}{$r_{n - 1}$} & $\succ$ & $r_n$ & $\succ$ & $r_{n - 2}$ & $\succ$ & $r_{n - 3}$ & $\cdots$ & $r_3$ & $\succ$ & \underline{$r_2$} & $\succ$ & $r_1$ \\ 
			$a_2$ : & \held{r_{1}} & $\succ$ & \heldb{r_2} & $\succ$ & \underline{$r_{3}$} & $\succ$ & $r_{4}$ & $\cdots$ & $r_{n-2}$ & $\succ$  & $r_{n - 1}$ & $\succ$ & $r_n$ \\ 
			$a_3$ : & \held{r_{2}} & $\succ$ & $r_1$ & $\succ$ & \heldb{r_{3}} & $\succ$ & \underline{$r_{4}$} & $\cdots$ &  $r_{n-2}$ & $\succ$ & $r_{n - 1}$ & $\succ$ & $r_n$ \\ 
			\multicolumn{1}{c}{$\vdots$} & \multicolumn{1}{c}{$\vdots$} & \multicolumn{1}{c}{$\vdots$} & \multicolumn{1}{c}{$\vdots$} & \multicolumn{1}{c}{$\vdots$} & \multicolumn{1}{c}{$\vdots$} & \multicolumn{1}{c}{$\vdots$} & \multicolumn{1}{c}{$\vdots$} & \multicolumn{1}{c}{$\vdots$} & \multicolumn{1}{c}{$\vdots$} & \multicolumn{1}{c}{$\vdots$} & \multicolumn{1}{c}{$\vdots$} \\
			$a_{n - 2}$ : & \held{r_{n - 3}} & $\succ$ & $r_{n - 4}$ & $\succ$ & $r_{n -5}$ & $\succ$ & $r_{n-4}$ &  $\cdots$ & \heldb{r_{n - 2}} & $\succ$ & \underline{$r_{n - 1}$} & $\succ$ & $r_n$ \\ 
			$a_{n - 1}$ : & \held{r_{n - 2}} & $\succ$ & $r_{n - 3}$ & $\succ$ & $r_{n - 4}$ & $\succ$ & $r_{n - 5}$ & $\cdots$ &  $r_{n-1}$ & $\succ$ & \heldb{r_{n}} & $\succ$ & \underline{$r_1$} \\ 
			$a_n$ : & \heldb{r_{1}} & $\succ$ & $r_{2}$ & $\succ$ & $r_{3}$ & $\succ$ & $r_{4}$ & $\cdots$ &  $r_{n-2}$ & $\succ$ & $r_{n-1}$ & $\succ$ & \underline{\held{r_n}} \\ 
		\end{tabular}
	\end{center}
	

	Let $\pi$ be the shaded allocation, and $\pi^*$ be the squared allocation ($a_1$ holds $r_{n-1}$ in both cases).   
	Observe that both $\pi$ and $\pi^*$ are Pareto-optimal allocations. 
	For $\pi^*$ this is obvious as only $a_n$ does not hold her top resource.   
	For $\pi$, notice that $a_2$ would only wish to swap for $r_1$, which is held by an agent who ranks it first. Then $a_3$ would only wish to swap for $r_1$ or $r_2$, and so on until $a_{n-2}$. 
	Finally, $a_{n-1}$ may swap with anyone but $a_n$, but no one wants to swap with her.  
	
	We see that $ark(\pi^*) = [(n-1) \cdot n + 1]/n$, while $ark(\pi) = [n\cdot (n+1)/2 + n -1]/n$, thus this instance shows that (asymptotically) the $\mathit{PoA}$ is at least 2. In \citet{damamme2015power} it is shown that 2 is an upper bound for the $\mathit{PoA}$ of any swap-deal procedure. We thus conclude that the $\mathit{PoA}$ is 2 here, as in the general domain.
	$\qed$
\end{proof}

Following \cite{damamme2015power}, we can also make some further observations. In fact, as both allocations are Pareto-optimal, this instance shows that the $\mathit{PoA}$ of any procedure satisfying individual rationality must be at least 2. Indeed, taking the shaded allocation $\pi$ as the initial allocation, any individually rational procedure would output $\pi$.  
Furthermore, as any Pareto-optimal allocation is stable for swap-deals, our upper bound remains valid.  

\begin{observation}
	\label{prop:PoA-TTC-is-2}
	For any individually rational allocation procedure $M$, 
		$\mathit{PoA}_{ark}(M)$ is $2$ in the single-peaked domain. This holds in particular for TTC and  Crawler.
\end{observation}

To address the price of dynamics, we must now make use of a different initial allocation, and show that both allocations are reachable by sequences of swaps. This would demonstrate that this gap could also only be due to the selection heuristics.  

\begin{proposition}
	\label{prop:PoD-ark-is-2-in-SP}
	For the family of swap dynamics $\swapfamily$, $\mathit{PoD}_{ark}(\swapfamily)$ is $2$ in the single-peaked domain.
\end{proposition}

\begin{proof}
	Consider again the instance presented in the proof of Proposition \ref{prop:PoD-ark-is-2-in-SP}.
	Take now the initial allocation to be $\pi^0= \langle r_2, r_3, r_4, \cdots ,  r_{n-1}, r_1, r_n\rangle$, the underlined allocation.  
	We first show that the shaded allocation $\pi$ can be reached from $\pi^0$ via improving swap-deals. First the deal $\tuple{a_n,a_{n-1}}$ is implemented. Then $a_1$ acts as a hub, and the following sequence is implemented $\tuple{a_1,a_2} \dots \tuple{a_1,a_{n-2}}$. 
	
	Now, the squared allocation $\pi^*$ can also be reached with swap-deals. Agent $a_1$ will act as a hub for odd agents, while $a_{n-1}$ will do the same for even agents. In other words, the following sequence is implemented: $\tuple{a_1,a_3}, \tuple{a_1,a_5} \dots \tuple{a_1,a_{n-2}}$ (note that $a_n$ gets her final resource from the start), followed by the sequence of swaps $\tuple{a_{n-1},a_2}, \tuple{a_{n-1},a_4} \dots \tuple{a_{n-1},a_{n-3}}$.
	
	The result follows then from the computations of the average rank presented in Proposition \ref{prop:PoD-ark-is-2-in-SP}. \qed
	
\end{proof}

Finally, we turn our attention to the mininimum rank counterparts of the same notions. Note that $n$ is certainly an upper bound in that case, as this is highest possible ratio between two allocations.

\begin{proposition} \label{prop:minRankPoA}
	For any $M \in \mathcal{M}_2$, $\mathit{PoA}_{mrk}(M)$ is $\Theta(n)$  in the single-peaked domain.
\end{proposition}

\begin{proof}
	Let us consider the following single-peaked instance involving $n$ agents. Take the shaded allocation as the initial allocation.\footnote{Again, the underlined allocation will only be relevant in a later proof.} 
	Let $\pi$ be the shaded allocation, and $\pi^*$ be the squared allocation. 
	\begin{center}
		\setlength{\tabcolsep}{4pt}
		\begin{tabular}{rccccccccccc}
			$a_1$ : & \heldb{r_{1}} & $\succ$ & $r_2$ & $\succ$ & $r_{3}$ & $\succ$ & $r_{4}$ & $\cdots$ & \underline{$r_{n - 1}$} & $\succ$ & $r_n$ \\ 
			$a_2$ : & \heldb{r_{2}} & $\succ$ & \held{r_3} & $\succ$ & \underline{$r_1$} & $\succ$ & $r_{4}$ & $\cdots$ & $r_{n - 1}$ & $\succ$ & $r_n$ \\ 
			$a_3$ : & \heldb{r_{3}} & $\succ$ & \held{r_4} & $\succ$ & \underline{$r_2$} & $\succ$ & $r_{1}$ & $\cdots$ & $r_{n - 1}$ & $\succ$ & $r_n$ \\ 
			$a_4$ : & \heldb{r_{4}} & $\succ$ & \held{r_5} & $\succ$ & \underline{$r_3$} & $\succ$ & $r_{2}$ & $\cdots$ & $r_{n - 1}$ & $\succ$ & $r_n$ \\ 
			\multicolumn{1}{c}{$\vdots$} & \multicolumn{1}{c}{$\vdots$} & \multicolumn{1}{c}{$\vdots$} & \multicolumn{1}{c}{$\vdots$} & \multicolumn{1}{c}{$\vdots$} & \multicolumn{1}{c}{$\vdots$} & \multicolumn{1}{c}{$\vdots$} & \multicolumn{1}{c}{$\vdots$} & \multicolumn{1}{c}{$\vdots$} & \multicolumn{1}{c}{$\vdots$} & \multicolumn{1}{c}{$\vdots$} & \multicolumn{1}{c}{$\vdots$} \\
			$a_n$ : & \held{r_{2}} & $\succ$ & $r_1$ & $\succ$ & $r_{3}$ & $\succ$ & $r_{4}$ & $\cdots$ & $r_{n - 1}$ & $\succ$ & \heldb{\heldc{r_n}} 
		\end{tabular}
	\end{center}

	It is easy to see that $mrk(\pi^*) = n-1$, while $mrk(\pi) = 1$, thus $\mathit{PoA}$ is $\Omega(n)$.  Clearly, the $\mathit{PoA}$ cannot be worse, thus it is $\Theta(n)$. $\qed$
\end{proof}

\begin{proposition}
	\label{prop:pod-mrk-is-2-in-SP}
	$\mathit{PoD}_{mrk}(\mathcal{M}_2)$ is $\Theta(n)$ in the single-peaked domain.
\end{proposition}

\begin{proof} Let us consider the single-peaked instance of Proposition \ref{prop:minRankPoA} and let us assume that the initial allocation is now $\pi^0= \langle r_{n-1}, r_1, r_2, \cdots , r_{n-2}, r_n\rangle$, the underlined allocation. 
	
	From this initial allocation $\pi^0$, it is possible to reach the shaded allocation $\pi = \langle r_{1}, r_2, r_3, \cdots , r_{n-1}, r_n\rangle$ with improving swap deals. This allocation is reached by performing $n-2$ swap-deals $\tuple{a_1, a_i}$, $\forall i \in \{n-1, \ldots, 2\}$ starting with $\tuple{a_1,a_{n-1}}$ and finishing with $\tuple{a_1,a_{2}}$.
	
	From $\pi^0$, it is also possible to reach $\pi^* =\langle r_{1}, r_3, r_4, \cdots  , r_{n}, r_2\rangle$ (the squared allocation) which is both Pareto-optimal and optimal in terms of mininimum rank. If $n$ is even, we perform the following sequence of couples of swap-deals $\tuple{a_{i-1}, a_n}$ and $\tuple{a_{i-2}, a_1}$ $\forall i \in \{n, n-2, \ldots, 6, 4\}$. If $n$ is odd, we perform the same sequence of couples of swap-deals but $\forall i \in \{n, n-2, \ldots, 7, 5\}$ and add two final swap-deals $\tuple{a_2, a_n}$ and $\tuple{a_1, a_n}$. 
	
	The result follows from the previous computations of the minimum rank in these two allocations.\qed
	
\end{proof}

These different theoretical results remain a worst-case analysis. To complete the picture, it would be valuable to know how well these swap dynamics do in practice. This is what we investigate in the next section.

\section{Experimental study}
\label{sec:expe}

Our objective in this section is to get some insights about the empirical behaviour of swap dynamics. Our study includes swap dynamics based on the different history-based selection heuristics introduced in Section \ref{sec:decentralProcedures}, as well as two other selection heuristics added for comparison: one which takes as input more than the mere history of deals ($M_2$-PW), and another one which relaxes the constraint of using swaps only ($M_3$-U).  
For completeness, we also compare their performance to that of TTC and  Crawler  (keeping in mind though that these two centralized procedures are not specifically designed to optimize our social welfare measures).

We now give the full detail of our protocol by specifying (i) how deals are selected in the swap dynamics, (ii) how preferences are generated, as well as (iii) the full specification of the parameters used in the experiments.

\subsection{Experimental protocol}

\subsubsection{Selection heuristics for swap dynamics}

Concerning swap-deal procedures, our study includes the history-based selection heuristics already introduced in Section \ref{sec:decentralProcedures}: $M_2$-RRA, $M_2$-RRP, $M_2$-RRR, $M_2$-U and $M_2$-RM.    
We also include for comparison a preference-based heuristics ($M_2$-PW) and a procedure allowing cycles involving up to 3 agents ($M_3$). More precisely: 
\begin{itemize}
	\item \emph{Priority to the worst-off agent} ($M_2$-PW): agents are ordered considering the rank of the resource they own from the one with the lowest rank to the one with the highest rank.  Agents are then paired in a round-robin fashion like $M_2$-RRA does following this ordering. Note that $M_2$-PW is more demanding than history-based heuristics as it requires some information about the agent's current rank which can only be collected via a central entity.
	\item \emph{Uniform up to three agents} ($M_3$-U): a deal is selected uniformly at random among all  possible deals involving 2 or 3 agents. If the deal is rational for all the agents involved, it is implemented. 
\end{itemize}

\subsubsection{Generation of single-peaked preferences}
\label{sec:SP-cultures}
Different methods can be envisioned to generate single-peaked preferences. We consider impartial culture for single-peaked domain (IC-SP) and uniform peak for the single-peaked domain (UP-SP). 

Single-peaked preferences under impartial culture (IC-SP) are drawn using the method proposed by \cite{walsh2015generating}. Given an axis, single-peaked preferences are built recursively from the end (i.e. the worst resource of the agent) to the top resource. At each iteration, the next resource in the preference order is randomly selected between the two extremes of the axis. The selected resource is then removed from the axis and so on until the axis is empty. 

In the uniform peak culture (UP-SP), presented by \cite{conitzer2009eliciting}, preferences are constructed by first picking uniformly at random a resource to be the peak. The second-highest ranked resource is chosen with equal probability from the two adjacent alternatives, and so on until a full order is obtained. 

As already mentioned by \cite{walsh2015generating}, the probabilities of the preference orders significantly differ from one method to another.  Under IC-SP, each single-peaked preference order has a uniform probability $\nicefrac{1}{2^{n-1}}$ to be selected. On the contrary, under UP-SP, probabilities over preference orders are not uniform.  In fact, the peak is uniformly drawn (with probability $\nicefrac{1}{n}$) and single-peaked preferences are then built from this peak. Since there is only one preference order with its peak at one end of the axis, these orders are more likely to be drawn than preference orders with a peak in the middle of the axis, for instance. 

Let us consider the case where $n=5$ and the axis is $r_1\lhd r_ 2 \lhd  r_3 \lhd r_4 \lhd r_5$. Preference orders $r_1 \succ r_2 \succ r_3 \succ r_4 \succ r_5$  and   $r_5 \succ r_4 \succ r_3 \succ r_2 \succ r_1$ both have a probability $\nicefrac{1}{5}$ to be generated whereas the probability to generate some preference order with the peak $r_3$ is $\nicefrac{1}{5}$, and there are 6 orders with this peak. Under IC-SP, the probability to generate the preference order   $r_1 \succ r_2 \succ r_3 \succ r_4 \succ r_5$ is $\nicefrac{1}{16}$. 

Closer inspection reveals that the frequency of each rank for a given resource is more evenly distributed with UP-SP than with IC-SP. 
For instance, under IC-SP,  resources on the end-sides of the single-peaked axis have a high probability  to be ranked among the top  resources of an agent  but they have a very low probability to be ranked in the second half of the preference linear order. Under UP-SP,  these resources have a high probability to be selected as  the top resource of an agent but they also have a more uniform probability distribution among all other possible ranks. A resource on the end-side of the single-peaked axis is thus more likely to obtain a low ranking under UP-SP.  

This suggests that profiles under UP-SP are more diverse than under IC-SP. To quantify this more precisely, we have computed two diversity indicies suggested by \cite{HashemiEndrissECAI2014}: the sum of Kendall's tau and the sum of Spearman distances.\footnote{The Kendall's tau distance between two preference order $\succ_i$ and $\succ_{i'}$ over $\resourceSet$ is the number of pairs $(r, r') \in \resourceSet^2$ such that $\succ_i$ and $\succ_{i'}$ do not rank $r$ and $r'$ in the same order.
		The Spearman distance between two preference order $\succ_i$ and $\succ_{i'}$ over $\resourceSet$ is defined as $\sum_{r \in \resourceSet} |rank_{a_i}(r) - rank_{a_{i'}}(r)|$.
		In both cases, the diversity index of a profile $L$ is the sum of the distance between every pair of preference orders $(\succ_i, \succ_{i'})$ of $L$.}
	The results confirm that UP-SP generates more diverse profiles on average compared to IC-SP.
	Take for instance the case of seven resources ($n = 7)$. We generated 10000 preference profiles under both UP-SP and IC-SP. With respect to the Kendall's tau index, profiles under UP-SP achieved an average diversity score of 75\% (normalized over the empirically observed maximum) while this value reaches only 48\% in the case of IC-SP. With the Spearman distance, the numbers are still in favour of UC-SP: 85\% versus 64\%.

\subsubsection{Experiments conducted}

	We conducted two types of experiments: 
	\begin{itemize}
		\item[(i)] A study of the average efficiency and fairness of swap dynamics. 
		The \emph{efficiency} (respectively \emph{fairness}) ratio is defined as the ratio between the average  (respectively minimum) rank realized by the procedure and the optimal average (respectively min) rank achievable for the instance (disregarding the individual rationality constraint). The optimal values are obtained using matching techniques \citep{garfinkel1971improved}. 
		These investigations can be viewed as the empirical counterpart of the worst-case analysis of Section~\ref{sec:pricesOf}. 
		
		To isolate the influence of the individual rationality  (IR) constraints alone, we also include  
		the rules ``$\max ark$ IR'' and ``$\max mrk$ IR'' which respectively returns the allocation maximizing the average and the minimum ranks under the constraint of being individually rational. 
		We also study the performance of swap dynamics with respect to TTC and  Crawler. 
		\item[(ii)] A study of the number of swaps  performed by swap dynamics  with respect to the number of cycles performed by TTC and Crawler. Of course this comparison needs to acknowledge that the size of cycles may be very different in both approaches. We thus also investigate the size of the cycles computed by TTC and Crawler. 
	\end{itemize}

\subsubsection{Parameters of the experiments}

For every experiment, we considered several number of resources, varying from $n = 2$ to $n = 60$ and we randomly generated 1000 instances in each case. For a given instance, the initial allocation is selected uniformly at random among all possible allocations.
For every instance, we ran the different procedures and report the results for a fixed number of resources, averaged over the 1000 instances.

\subsection{Analysis of the results}

\subsubsection{Efficiency and fairness of swap dynamics}

	\paragraph{Comparison among swap dynamics.}
	Figure \ref{fig:expeSW_Swaps} presents the efficiency and fairness ratio for each swap dynamic for preferences generated both under the IC-SP model (left side) and under the UP-SP model (right side). Results for $\max mrk$ IR and $\max ark$ IR are also presented for comparison purposes.

	\begin{figure*}
		\includegraphics[width=\linewidth]{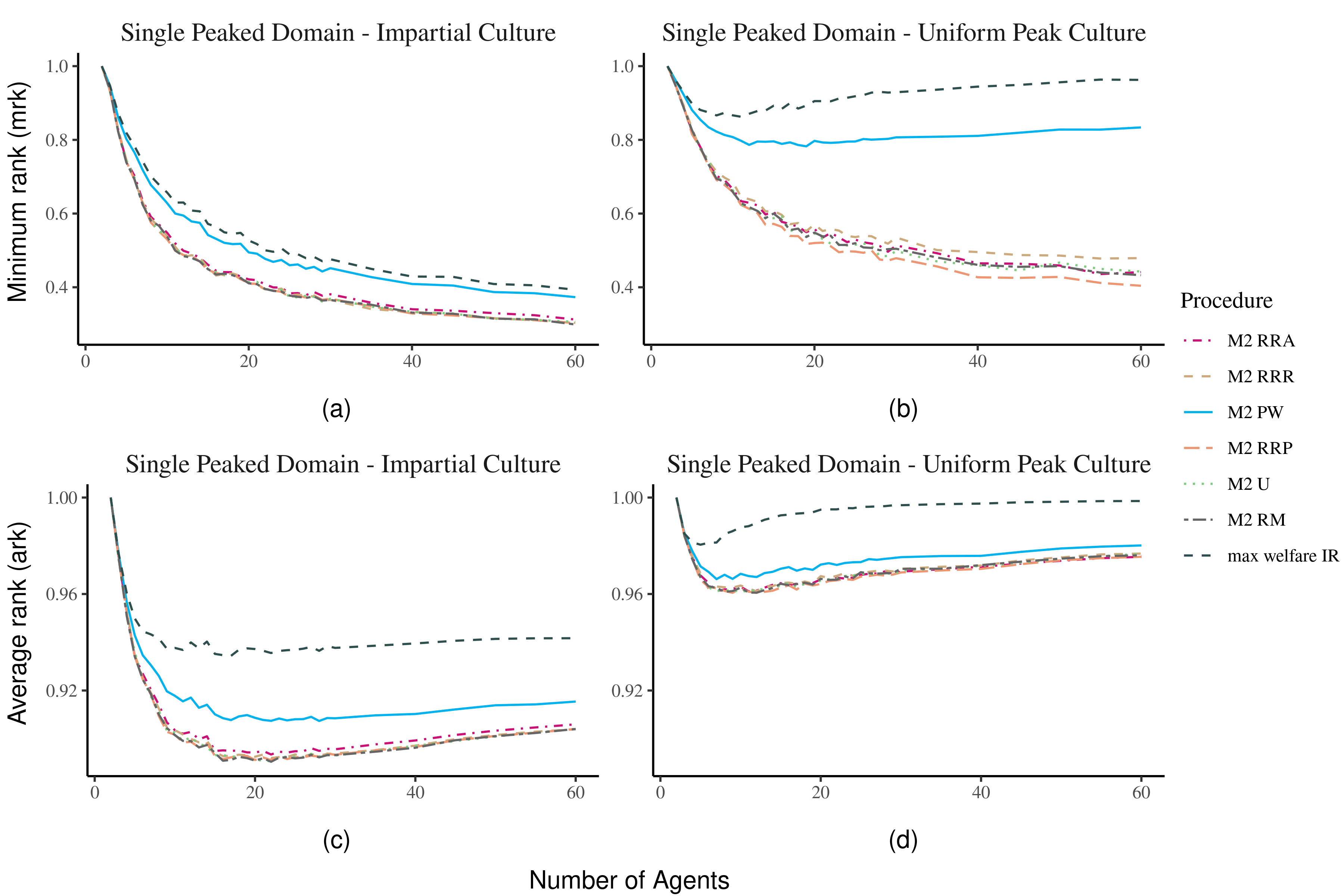}
		\caption{Average efficiency and fairness ratios for each heuristic of the swap-deal procedure and for each preference culture. For comparison baseline, the average maximum welfare achievable under individual rationality ($\max mrk$ IR for the upper part and $\max ark$ IR for the lower part) is also presented. The number of agents varies from 2 to 60.}
		\label{fig:expeSW_Swaps}
	\end{figure*}

	\revisedC{Regarding the minimum rank---Figures \ref{fig:expeSW_Swaps}(a) and \ref{fig:expeSW_Swaps}(b)---$M_2$-PW reaches} significantly better allocations than other dynamics. This is expected: this heuristic favours deals between agents holding low ranked resources and thus tends to improve the satisfaction of the poorest agents. 
	On the contrary, round-robin heuristics with their fixed ordering tend to always favour the same agents and thus often leads to lower minimum rank. However, notice that the randomized round-robin version ($M_2$-RRR) slightly corrects this.
	
	Interestingly, the performance of $M_2$-PW 
	is highly sensitive to the culture considered.
	This seems to be due to the way preferences are generated. 
	Indeed, the results for $\max mrk$ IR follow similar patterns, indicating that \revisedC{IC-SP (Figure \ref{fig:expeSW_Swaps}(a)) is much more constrained by the initial allocation than UP-SP (Figure \ref{fig:expeSW_Swaps}(b)) is.} Still, under UP-SP, $M_2$-PW returns close to optimal allocations, while other dynamics remain with a fairness ratio of about 50\%.
	Following this observation, as the allocation returned by $M_2$-PW can be taken as a witness of a good allocation reachable by swap dynamics, we see that the empirical minimum rank price of dynamics for our studied family of dynamics is much higher in UP-SP than in IC-SP. 
	
	\revisedC{Regarding the average rank of the outcomes---Figures \ref{fig:expeSW_Swaps}(c) and \ref{fig:expeSW_Swaps}(d)---all the heuristics} obtain very good results \revisedC{(above $90\%$  under IC-SP (Figure \ref{fig:expeSW_Swaps}(c)) and above $96\%$ under UP-SP (Figure \ref{fig:expeSW_Swaps}(d))).} It can be observed that all heuristics give very similar values.
	Recall that our price of anarchy results informed us that in principle, swap dynamics can return allocation with an efficiency ratio as low as $50\%$. Given these high performances, it is not surprising that very little room is left to observe difference among selection heuristics. Still, it is noteworthy that $M_2$-PW provides, again, the best results (while this selection heuristics doesn't seem designed to optimize this measure of welfare at first sight).  
	Note that our remark about the relative performance of P2-PW under the two cultures does not hold here: this suggests that only a few agents are left with low-ranked resources with our swap dynamics, even under UP-SP. 

\begin{figure*}
	\includegraphics[width=\linewidth]{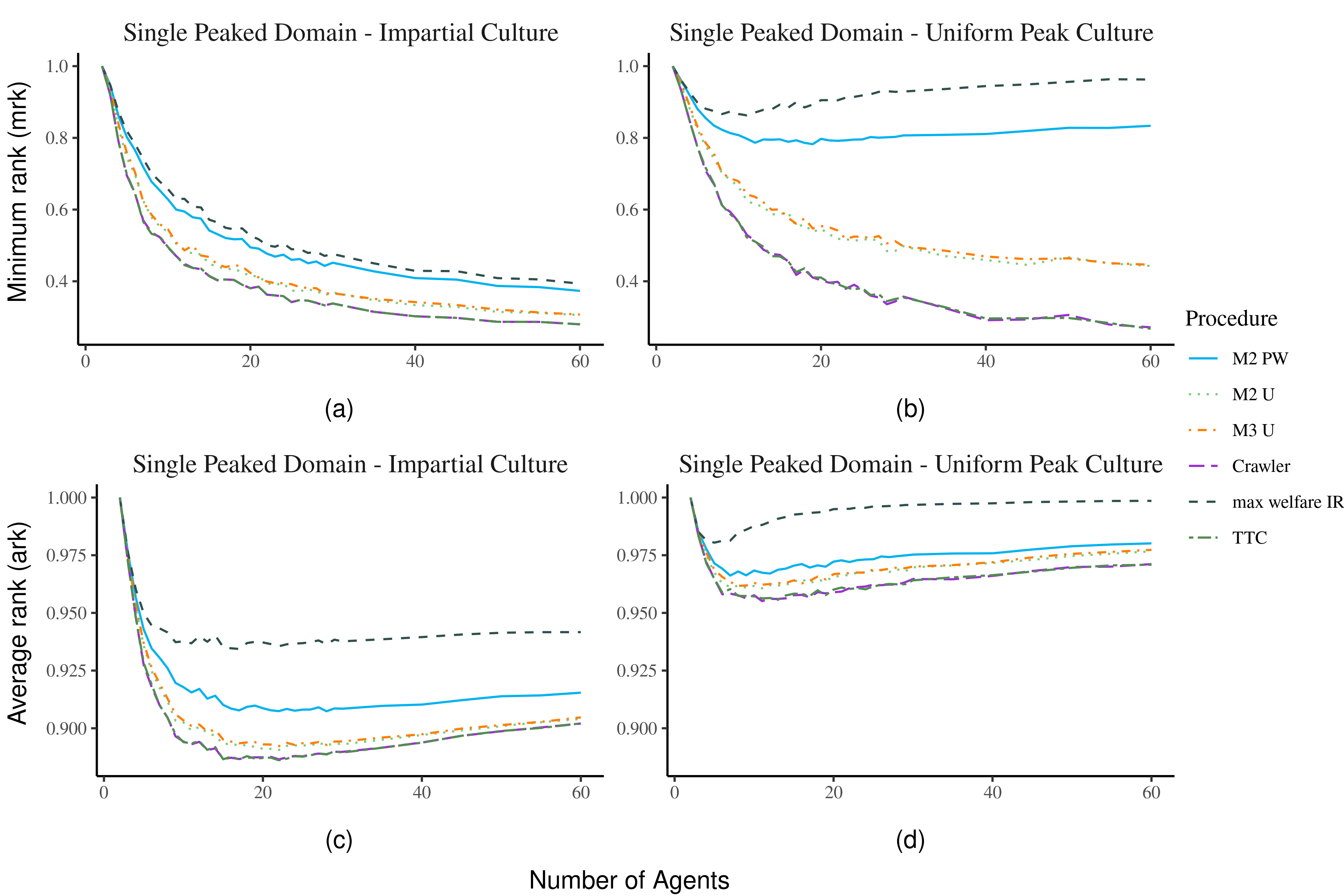}
	\caption{Average efficiency and fairness ratios for each procedure and for each preference domain. Curves for ``$\max$ welfare IR'' correspond to $\max mrk$ IR for the upper part of the figure and to $\max ark$ IR for the lower part. The number of agents varies from 2 to 60.}
	\label{fig:expeSW_All}
\end{figure*}

\paragraph{Comparison with TTC and Crawler.}
We now compare swap dynamics to other procedures.
	For the sake of readability, we only keep $M_2$-PW (which offered the best performance) and $M_2$-U (the most decentralized selection heuristics, requiring hardly any communication between the agents). Results are presented in Figure \ref{fig:expeSW_All}.

	\revisedC{Regarding the minimum rank---Figures \ref{fig:expeSW_All}(a) and \ref{fig:expeSW_All}(b)---a first observation} is that $M_2$-U performs significantly better than TTC and Crawler algorithm (which give similar results). The poor performances of these procedures, not designed for this purpose, are not surprising: they typically leave some agents with very little opportunities to be involved in a cycle-deal. For instance, by selecting top-trading cycles, TTC can remove  all the agents involved once a deal is implemented, thus significantly limiting the range of possible deals for the remaining agents. On the other hand, as discussed, $M_2$-PW favours low ranked agents and gives more opportunities to these agents to exchange their initial resources. 
	
	\revisedC{Regarding the average rank of the outcomes obtained by the different procedures---Figures \ref{fig:expeSW_All}(c) and \ref{fig:expeSW_All}(d)---}swap dynamics slightly outperform TTC and Crawler under both cultures.

	Finally, it can also be observed that $M_3$-U shows no significant difference with $M_2$-U under the same heuristics (uniform selection of the exchanges), either for the average or minimum rank: slightly increasing the size of the deals leads raises complex coordination issues with no evidence of improvements in terms of welfare. 

\medskip

	These experiments promote the relevance of swap dynamics: besides being simple to implement, they also provide very good results both in terms of average and minimum rank, close to the optimal when individual rationality is enforced. For the minimum rank and under UP-SP, we observed  that our  history-based swap dynamics may incur a significant cost, even though better allocations would be reachable by swaps (this was shown by comparison to a more specifically designed, not history-based, selection heuristic).

\subsubsection{Length of swap dynamics}

Figure \ref{fig:expeNbSwaps} represents the mean number of swap-deals performed (solid lines) when varying the size of the instances. For the sake of readability, we only show some selection heuristics (the other ones performed a number of swaps similar to either $M_2$-RRA or $M_2$-U). Dotted lines represent the highest and the lowest numbers of swaps registered for an instance of a given size (averaged over 1000 randomly generated instances).

It can be observed that the different selection heuristics are close in terms of the  average number of swaps. However, the number of swaps performed under UP-SP is significantly higher  than under IC-SP.  This phenomenon is related to  the method used to generate single-peaked preferences. 
According to the discussion of Section \ref{sec:SP-cultures}, UP-SP offers a greater diversity of profiles, which leads to more opportunities of exchanges. 
However, even though we noted in previous experiments that the results in terms of welfare were better in UP-SP than in IC-SP, note that the proportion is of a different order here: there are on average three times more swap-deals performed under UP-SP. 
	
	The higher diversity of profiles under UP-SP can also explain the higher variance in the number of swaps. Typically, when agents have completely opposite preference orders, the number of deals can greatly vary depending on which agents are encountered. This is much more likely to happen under UP-SP. 

\begin{figure*}
	\includegraphics[width=\linewidth]{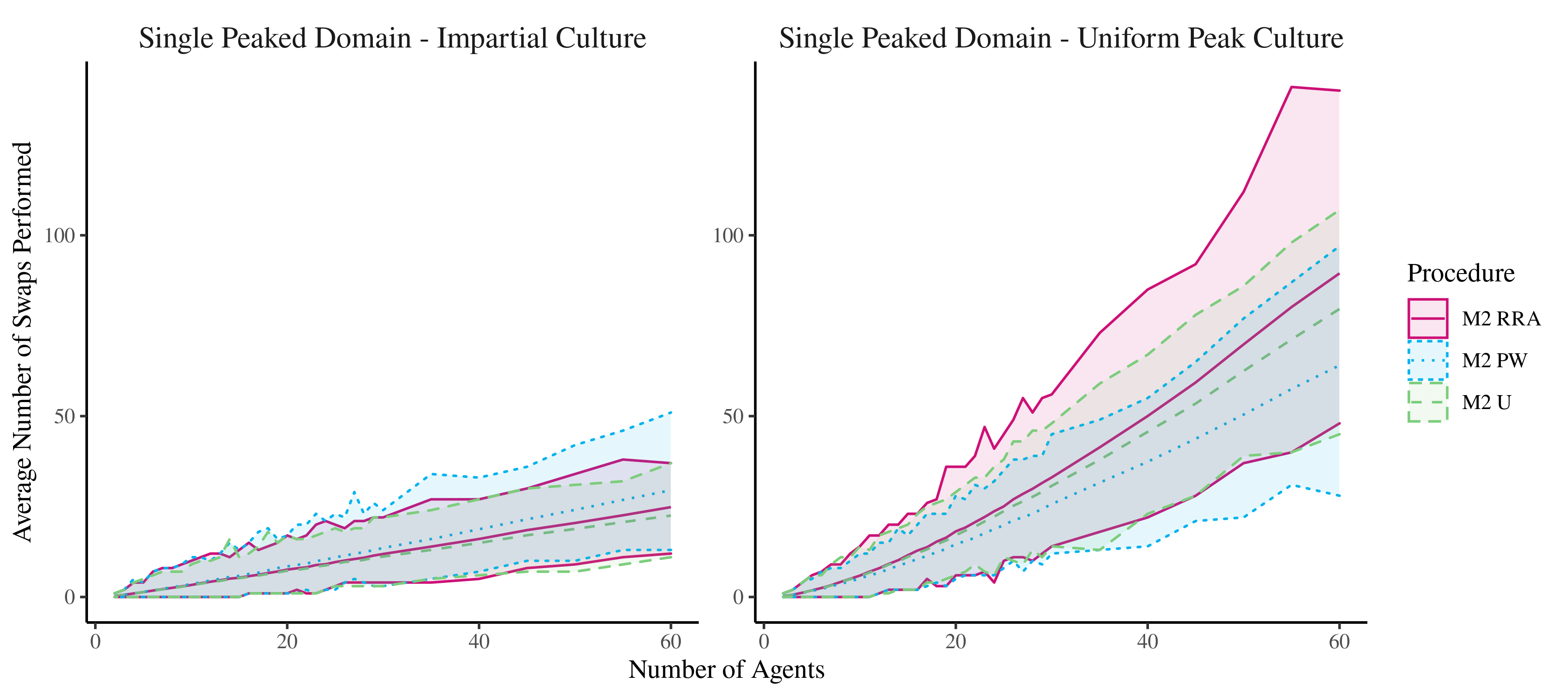}
	\caption{Average number of swap-deals performed, the filled area represents the range, from the minimum to the maximum. The number of agents varies from 2 to 60.}
	\label{fig:expeNbSwaps}
\end{figure*}
\begin{figure*}
	\includegraphics[width=\linewidth]{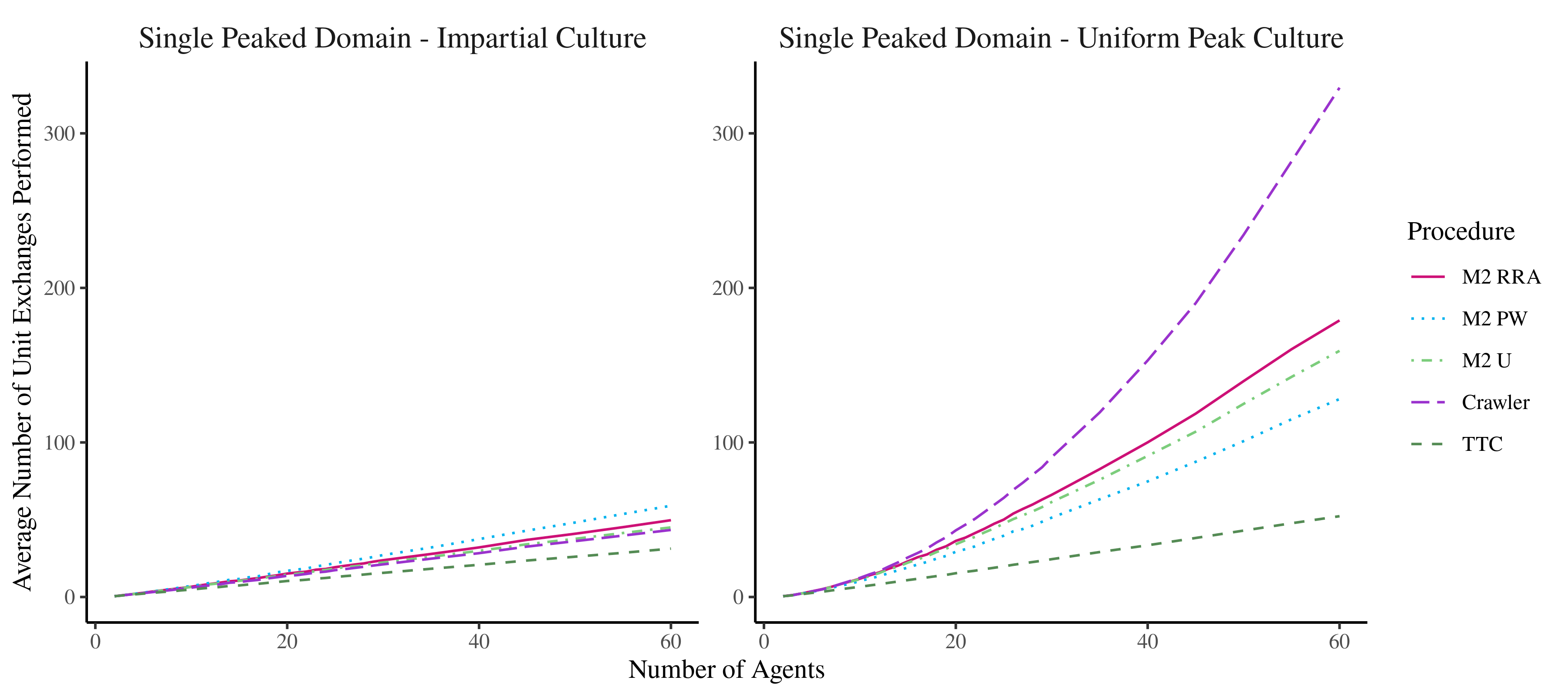}
	\caption{Average number of unit exchanges performed by some swamp dynamics, TTC and Crawler. A cycle-deal of length $k$ counts for $k$ unit exchanges. The number of agents varies from 2 to 60.}
	\label{fig:expeNbExchanges}
\end{figure*}
\begin{figure*}
	\includegraphics[width=\linewidth]{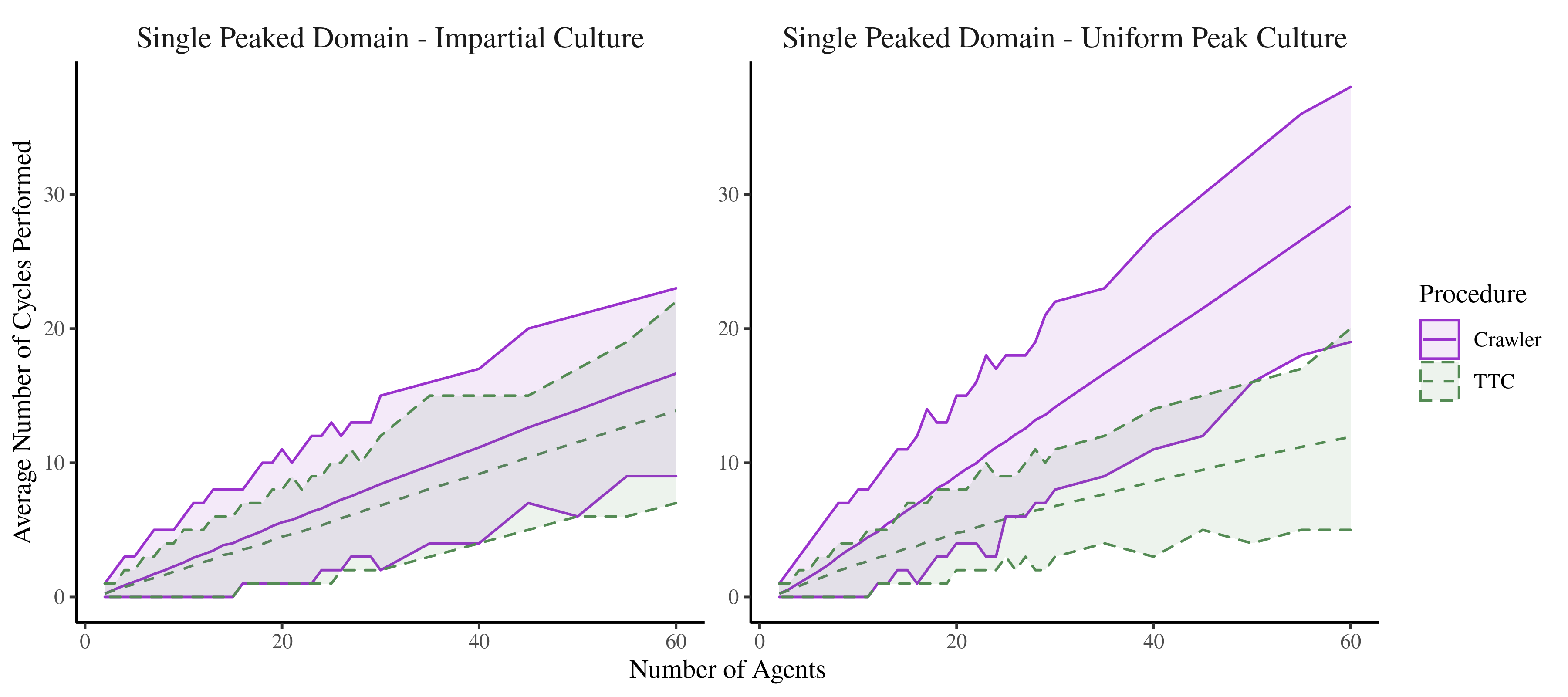}
	\caption{Average number of cycles performed by TTC and Crawler, the filled area represents the range, from the minimum to the maximum. The number of agents varies from 2 to 60.}
	\label{fig:expeNbCycle}
\end{figure*}
\begin{figure*}
	\includegraphics[width=\linewidth]{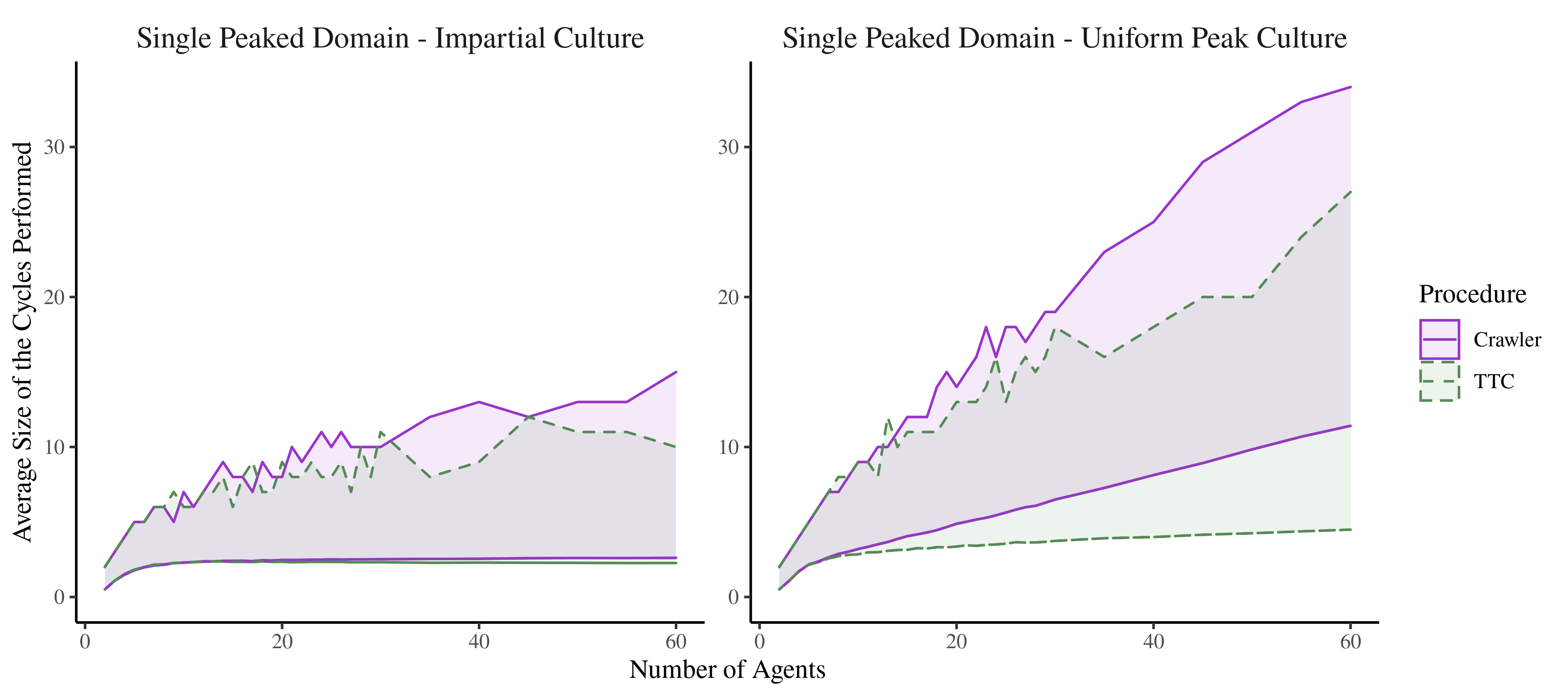}
	\caption{Average size of the cycles performed by TTC and Crawler, the filled area represents the maximum. The minimum, not shown, is constantly equal to zero. The number of agents varies from 2 to 60.}
	\label{fig:expeCycleSize}
\end{figure*}

\paragraph{Comparison with TTC and  Crawler.}
\revisedC{
The number of deals performed by swap dynamics can be compared with the number of deals induced by TTC or  Crawler. To do so we introduce the number of unit exchanges performed by a procedure. It corresponds to the sum of the length of the cycle-deals applied. For a swap dynamics, it is twice the number of swap-deals performed for instance. This measure allows us to compare all the procedures even though they perform cycle-deals of different sizes. As shown in Figure \ref{fig:expeNbExchanges}, the procedure can be clustered in three distinct classes. The swap dynamics can be grouped together, offering a middle way between TTC which performs particularly well and Crawler which performs particularly badly in terms of number of unit exchanges.
}

\revisedC{
The difference between Crawler and TTC can be explained by the fact that the former performs much more deals on average than TTC. As shown in Figure \ref{fig:expeNbCycle}, this can be observed for any number of agents and the gap increases as the number of agents increases.  The size of the deals is also larger when implementing Crawler  (Figure \ref{fig:expeCycleSize}). Both of these facts naturally lead to a higher number of unit exchanges.
}
	
	The larger number of deals and the larger  sizes of deals performed by Crawler are related to the fact that agents are ordered with respect to the resource they initially hold and with respect to the order of these resources on the single-peaked axis. Based on this order,  Crawler only considers deals $\mu = \tuple{a_1, \ldots, a_k}$ such that $a_i$ and $a_{i+1}$ (with $i \in \{1, 2, \cdots k-1\}$) are owners of adjacent resources on the axis.  Hence, TTC allows for considering  a larger range of cycle-deals than  Crawler. 

Putting aside the comparison between  Crawler, and TTC, the size of the deals to implement can be extremely large for both procedures as depicted in Figure \ref{fig:expeCycleSize}. 
A cycle-deal may involve more than half (respectively 35\%) of the agents under UP-SP and a fifth (respectively 15\%) of the agents under IC-SP for  Crawler (respectively TTC). The linear regressions explaining these values are presented in Table \ref{tab:regressionMaxSize}. \revisedC{Overall, even though TTC performs less unit exchanges than our swap dynamics, it can still need to perform extremely large cycle-deals.}

\begin{table}
	\centering
	\begin{tabular}{cccccc}
		\toprule
		& TTC & TTC && Crawler & Crawler \\
		& IC-SP & UP-SP && IC-SP & UP-SP \\
		\midrule
		$\beta_0$ & 4.64 & 4.73 && 4.27 & 2.86 \\
		$\beta_1$ & 0.15 & 0.36 && 0.20 & 0.56 \\
		$R^2$ & 0.7872 & 0.9032 && 0.883 & 0.9862 \\
		\bottomrule
	\end{tabular}
	\caption{Linear regression of the maximum size of the deals over the number of agents. $\beta_0$ and $\beta_1$ are the coefficients of the regression: $maxSize = \beta_0 + \beta_1 * n$ where $n$ is the number of agents, $R^2$ the coefficient of determination. The p-value is omitted as it is not meaningful for simulations (see, e.g., \cite{lee2015complexities}).}
	\label{tab:regressionMaxSize}
\end{table}


	These experiments show that the number of swap-deals can be significant, especially under UP-SP where long sequences of slightly improving steps are more likely to occur. On the other hand, we show empirically that TTC and Crawler are indeed prone to require the implementation of cycle-deals of large size. This illustrates the trade-off which occurs between the coordination requirements and the length of the procedure.

\section{Conclusion}
\label{sec:conclu}

This paper studied the property of swap dynamics for the allocation of indivisible resources in the restricted setting of single-peaked housing markets. The basic principle of the procedure is to let agents perform pairwise improving exchanges. We showed in particular that the single-peaked domain happens to be maximal for guaranteeing convergence to Pareto-optimal outcomes with such dynamics. We also showed that the outcomes of TTC and of Crawler are reachable by swap-deal sequences. 
To refine our analysis, we have studied two further notions: the average rank and the minimum rank of the resources obtained by the agents. None of the procedures discussed in this paper are specifically designed for optimizing  these ranks, even though these notions capture very natural criteria of efficiency (for the average rank) and fairness (for the minimum rank). It thus seems important to study how these allocation procedures behave on that respect. 
To complement worst-case theoretical bounds on the loss of social welfare induced by swap dynamics, we ran experiments which demonstrated that they actually provide good results in practice. 


	Our focus on swap-deals is motivated by their minimal coordination requirements. In the context of this paper, our experimental results suggest that there is little gain to expect when allowing deals involving three agents. A complementary relaxation is to permit deals possibly not improving for some of the agents involved (but at least a  majority). 
	This may unveil interesting connections with other notions:  \cite{kondratev2019minimal} recently established for instance that an allocation is popular in a housing market if and only if no (majority) improving exchanges between three agents exist. 

To go further with the experiments, it would be interesting to use real data. Our attempt to use data from Preflib \citep{mattei2013preflib} was not successful as there is no dataset that is single-peaked when there are more than 5 agents. Getting such preferences would be an interesting way to confirm our results. 
Regarding the model itself, \cite{bade2019matching} extended  Crawler  to single-peaked domains with indifferences. Whether our results with swap dynamics could be similarly generalized is an avenue for future research. 
Overall, this paper raises the exciting issue of giving a characterization of rules that are efficient, individually rational and strategy-proof for the single-peaked domain. Such a characterization would also provide more insights about the type of selection heuristic for which the swap dynamics are strategy-proof. It would also be interesting to tackle the characterization of the swap dynamics efficiency, that is, identifying the domain on which the swap dynamics are Pareto-optimal. Our maximality result is a significant step in this sense, it would be nice to complete the picture.


\section*{Acknowledgements}
We thank Sophie Bade, Yann Chevaleyre, Anastasia Damamme, and Julien Lesca, for discussions related to this topic as well as the anonymous reviewers for their comments and suggestions which significantly improved the paper.


\bibliographystyle{spbasic}


\begin{thebibliography}{67}
	\providecommand{\natexlab}[1]{#1}
	\providecommand{\url}[1]{{#1}}
	\providecommand{\urlprefix}{URL }
	\expandafter\ifx\csname urlstyle\endcsname\relax
	\providecommand{\doi}[1]{DOI~\discretionary{}{}{}#1}\else
	\providecommand{\doi}{DOI~\discretionary{}{}{}\begingroup
		\urlstyle{rm}\Url}\fi
	\providecommand{\eprint}[2][]{\url{#2}}
	
	\bibitem[{Abbassi et~al.(2013)Abbassi, Lakshmanan, and Xie}]{abbassi2013fair}
	Abbassi Z, Lakshmanan LV, Xie M (2013) Fair recommendations for online barter
	exchange networks. In: Proceedings of the 16th International Workshop on the
	Web and Databases (WebDB), pp 43--48
	
	\bibitem[{Ackermann et~al.(2011)Ackermann, Goldberg, Mirrokni, R{\"o}glin, and
		V{\"o}cking}]{ackermann2011uncoordinated}
	Ackermann H, Goldberg PW, Mirrokni VS, R{\"o}glin H, V{\"o}cking B (2011)
	Uncoordinated two-sided matching markets. SIAM Journal on Computing
	40(1):92--106
	
	\bibitem[{Anshelevich et~al.(2013)Anshelevich, Das, and
		Naamad}]{anshelevich2013anarchy}
	Anshelevich E, Das S, Naamad Y (2013) Anarchy, stability, and utopia: creating
	better matchings. Autonomous Agents and Multi-Agent Systems 26(1):120--140
	
	\bibitem[{Arrow(1951)}]{arrow1951social}
	Arrow KJ (1951) Social choice and individual values. Wiley
	
	\bibitem[{Aziz(2020)}]{Aziz20}
	Aziz H (2020) Developments in multi-agent fair allocation. In: Proceedings of
	the 34th AAAI Conference on Artificial Intelligence (AAAI), pp 13563--13568
	
	\bibitem[{Aziz et~al.(2015)Aziz, Gaspers, Mackenzie, and Walsh}]{aziz2015fair}
	Aziz H, Gaspers S, Mackenzie S, Walsh T (2015) Fair assignment of indivisible
	objects under ordinal preferences. Artificial Intelligence 227:71--92
	
	\bibitem[{Aziz et~al.(2016)Aziz, Bir{\'o}, Lang, Lesca, and
		Monnot}]{aziz2016optimal}
	Aziz H, Bir{\'o} P, Lang J, Lesca J, Monnot J (2016) Optimal reallocation under
	additive and ordinal preferences. In: Proceedings of the 15th International
	Joint Conference on Autonomous Agents and Multi-Agent Systems (AAMAS), pp
	402--410
	
	\bibitem[{Bade(2019)}]{bade2019matching}
	Bade S (2019) Matching with single-peaked preferences. Journal of Economic
	Theory 180:81--99
	
	\bibitem[{Ballester and Haeringer(2011)}]{ballester2011characterization}
	Ballester MA, Haeringer G (2011) A characterization of the single-peaked
	domain. Social Choice and Welfare 36(2):305--322
	
	\bibitem[{Bentert et~al.(2019)Bentert, Chen, Froese, and
		Woeginger}]{BentertEtAl2019}
	Bentert M, Chen J, Froese V, Woeginger GJ (2019) Good things come to those who
	swap objects on paths. CoRR abs/1905.04219
	
	\bibitem[{Beynier et~al.(2019)Beynier, Bouveret, Lema{\^\i}tre, Maudet, Rey,
		and Shams}]{beynier2018efficiency}
	Beynier A, Bouveret S, Lema{\^\i}tre M, Maudet N, Rey S, Shams P (2019)
	Efficiency, sequenceability and deal-optimality in fair division of
	indivisible goods. In: Proceedings of the 18th International Joint Conference
	on Autonomous Agents and Multi-Agent Systems (AAMAS), pp 900--908
	
	\bibitem[{Beynier et~al.(2020)Beynier, Maudet, Rey, and
		Shams}]{beynier2020optimal}
	Beynier A, Maudet N, Rey S, Shams P (2020) An optimal procedure to check
	pareto-optimality in house markets with single-peaked preferences. ArXiv
	2002.11660
	
	\bibitem[{Black(1948)}]{black1948rationale}
	Black D (1948) On the rationale of group decision-making. Journal of political
	economy 56(1):23--34
	
	\bibitem[{Van~den Bos et~al.(1997)Van~den Bos, Lind, Vermunt, and
		Wilke}]{van1997judge}
	Van~den Bos K, Lind EA, Vermunt R, Wilke HA (1997) How do {I} judge my outcome
	when {I} do not know the outcome of others? the psychology of the fair
	process effect. Journal of personality and social psychology 72(5):1034
	
	\bibitem[{Bouveret et~al.(2010)Bouveret, Endriss, and Lang}]{bouveret2010fair}
	Bouveret S, Endriss U, Lang J (2010) Fair division under ordinal preferences:
	Computing envy-free allocations of indivisible goods. In: Proceedings of the
	19th European Conference on Artificial Intelligence (ECAI), pp 387--392
	
	\bibitem[{Bouveret et~al.(2016)Bouveret, Chevaleyre, and
		Maudet}]{bouveret2016fair}
	Bouveret S, Chevaleyre Y, Maudet N (2016) Fair allocation of indivisible goods.
	In: Brandt F, Conitzer V, Endriss U, Lang J, Procaccia ADE (eds) Handbook of
	Computational Social Choice, Cambridge University Press, p 284–310
	
	\bibitem[{Brams et~al.(2003)Brams, Edelman, and Fishburn}]{brams2003fair}
	Brams SJ, Edelman PH, Fishburn PC (2003) Fair division of indivisible items.
	Theory and Decision 55(2):147--180
	
	\bibitem[{Brandt and Wilczynski(2019)}]{brandt2019convergence}
	Brandt F, Wilczynski A (2019) On the convergence of swap dynamics to
	pareto-optimal matchings. In: Proceedings of the 15th International Workshop
	on Internet and Network Economics (WINE), pp 100--113
	
	\bibitem[{Chevaleyre et~al.(2006)Chevaleyre, Dunne, Endriss, Lang,
		Lema{\^i}tre, Maudet, Padget, Phelps, Rodriguez-Aguilar, and
		Sousa}]{chevaleyre2006issues}
	Chevaleyre Y, Dunne PE, Endriss U, Lang J, Lema{\^i}tre M, Maudet N, Padget J,
	Phelps S, Rodriguez-Aguilar JA, Sousa P (2006) Issues in multiagent resource
	allocation. Informatica 30(1)
	
	\bibitem[{Chevaleyre et~al.(2007)Chevaleyre, Endriss, Estivie, and
		Maudet}]{chevaleyre2007reaching}
	Chevaleyre Y, Endriss U, Estivie S, Maudet N (2007) Reaching envy-free states
	in distributed negotiation settings. In: Proceedings of the 20th
	International Joint Conference on Artificial Intelligence (IJCAI), vol~7, pp
	1239--1244
	
	\bibitem[{Chevaleyre et~al.(2010)Chevaleyre, Endriss, and
		Maudet}]{chevaleyre2010simple}
	Chevaleyre Y, Endriss U, Maudet N (2010) Simple negotiation schemes for agents
	with simple preferences: Sufficiency, necessity and maximality. Autonomous
	Agents and Multi-Agent Systems 20(2):234--259
	
	\bibitem[{Chevaleyre et~al.(2017)Chevaleyre, Endriss, and
		Maudet}]{chevaleyre2017distributed}
	Chevaleyre Y, Endriss U, Maudet N (2017) Distributed fair allocation of
	indivisible goods. Artificial Intelligence 242:1--22
	
	\bibitem[{Conitzer(2009)}]{conitzer2009eliciting}
	Conitzer V (2009) Eliciting single-peaked preferences using comparison queries.
	Journal of Artificial Intelligence Research 35:161--191
	
	\bibitem[{Damamme et~al.(2015)Damamme, Beynier, Chevaleyre, and
		Maudet}]{damamme2015power}
	Damamme A, Beynier A, Chevaleyre Y, Maudet N (2015) The power of swap deals in
	distributed resource allocation. In: Proceedings of the 14th International
	Joint Conference on Autonomous Agents and Multi-Agent Systems (AAMAS), pp
	625--633
	
	\bibitem[{DeMarzo et~al.(2003)DeMarzo, Vayanos, and
		Zwiebel}]{demarzo2003persuasion}
	DeMarzo PM, Vayanos D, Zwiebel J (2003) Persuasion bias, social influence, and
	unidimensional opinions. The Quarterly journal of economics 118(3):909--968
	
	\bibitem[{Dunne et~al.(2005)Dunne, Wooldridge, and Laurence}]{dunne}
	Dunne P, Wooldridge M, Laurence M (2005) The complexity of contract
	negotiation. Artificial Intelligence 164(1):23--46
	
	\bibitem[{Egan(2014)}]{egan2014something}
	Egan PJ (2014) “do something” politics and double-peaked policy
	preferences. The Journal of Politics 76(2):333--349
	
	\bibitem[{Elkind et~al.(2017)Elkind, Lackner, and
		Peters}]{elkind2017structured}
	Elkind E, Lackner M, Peters D (2017) Structured preferences. In: E U (ed)
	Trends in Computational Social Choice, Lulu.com, pp 187--207
	
	\bibitem[{Endriss and Maudet(2005)}]{endriss2005communication}
	Endriss U, Maudet N (2005) On the communication complexity of multilateral
	trading. Autonomous Agents and Multi-Agent Systems 11(1):91--107
	
	\bibitem[{Endriss et~al.(2006)Endriss, Maudet, Sadri, and
		Toni}]{endriss2006negotiating}
	Endriss U, Maudet N, Sadri F, Toni F (2006) Negotiating socially optimal
	allocations of resources. Journal of artificial intelligence research
	
	\bibitem[{Garfinkel(1971)}]{garfinkel1971improved}
	Garfinkel RS (1971) An improved algorithm for the bottleneck assignment
	problem. Operations Research 19(7):1747--1751
	
	\bibitem[{Gourv{\`e}s et~al.(2017)Gourv{\`e}s, Lesca, and
		Wilczynski}]{gourves2017object}
	Gourv{\`e}s L, Lesca J, Wilczynski A (2017) Object allocation via swaps along a
	social network. In: Proceedings of the 26th International Joint Conference on
	Artificial Intelligence (IJCAI), pp 213--219
	
	\bibitem[{Hashemi and Endriss(2014)}]{HashemiEndrissECAI2014}
	Hashemi V, Endriss U (2014) Measuring diversity of preferences in a group. In:
	Proceedings of the 21st European Conference on Artificial Intelligence, pp 423--428
	
	\bibitem[{Hougaard et~al.(2014)Hougaard, Moreno-Ternero, and
		{\O}sterdal}]{hougaard2014assigning}
	Hougaard JL, Moreno-Ternero JD, {\O}sterdal LP (2014) Assigning agents to a
	line. Games and Economic Behavior 87:539--553
	
	\bibitem[{Huang and Xiao(2019)}]{huang2019object}
	Huang S, Xiao M (2019) Object reachability via swaps along a line. In:
	Proceedings of the 33rd AAAI Conference on Artificial Intelligence (AAAI),
	vol~33, pp 2037--2044
	
	\bibitem[{Kasajima(2013)}]{kasajima2013probabilistic}
	Kasajima Y (2013) Probabilistic assignment of indivisible goods with
	single-peaked preferences. Social Choice and Welfare 41(1):203--215
	
	\bibitem[{Kondratev and Nesterov(2019)}]{kondratev2019minimal}
	Kondratev AY, Nesterov AS (2019) Minimal envy and popular matchings. arXiv
	preprint arXiv:1902.08003
	
	\bibitem[{Koutsoupias and Papadimitriou(1999)}]{KoutsoupiasEtAl2009}
	Koutsoupias E, Papadimitriou C (1999) Worst-case equilibria. In: Meinel C,
	Tison S (eds) Proceedings of the 16th International Symposium on Theoretical
	Aspects of Computer Science (STACS), pp 404--413
	
	\bibitem[{Lee et~al.(2015)Lee, Filatova, Ligmann-Zielinska, Hassani-Mahmooei,
		Stonedahl, Lorscheid, Voinov, Polhill, Sun, and Parker}]{lee2015complexities}
	Lee JS, Filatova T, Ligmann-Zielinska A, Hassani-Mahmooei B, Stonedahl F,
	Lorscheid I, Voinov A, Polhill JG, Sun Z, Parker DC (2015) The complexities
	of agent-based modeling output analysis. Journal of Artificial Societies and
	Social Simulation 18(4)
	
	\bibitem[{Leventhal(1980)}]{leventhal1980should}
	Leventhal GS (1980) What should be done with equity theory? In: Social
	exchange, Springer, pp 27--55
	
	\bibitem[{List et~al.(2013)List, Luskin, Fishkin, and
		McLean}]{list2013deliberation}
	List C, Luskin RC, Fishkin JS, McLean I (2013) Deliberation, single-peakedness,
	and the possibility of meaningful democracy: evidence from deliberative
	polls. The journal of politics 75(1):80--95
	
	\bibitem[{Liu(2018)}]{liu2018large}
	Liu P (2018) A large class of strategy-proof exchange rules with single-peaked
	preferences. Research Collection School of Economics, mimeo
	
	\bibitem[{Ma(1994)}]{ma1994strategy}
	Ma J (1994) Strategy-proofness and the strict core in a market with
	indivisibilities. International Journal of Game Theory 23(1):75--83
	
	\bibitem[{Mattei and Walsh(2013)}]{mattei2013preflib}
	Mattei N, Walsh T (2013) Preflib: A library for preferences http://www.
	preflib. org. In: Proceedings of the 3rd International Conference on
	Algorithmic Decision Theory (ADT), Springer, pp 259--270
	
	\bibitem[{Moulin(1991)}]{moulin1991axioms}
	Moulin H (1991) Axioms of cooperative decision making. 15, Cambridge university
	press
	
	\bibitem[{Moulin(2019)}]{moulin2018fair}
	Moulin H (2019) Fair division in the age of internet. Annual Review of
	Economics 11:407--441
	
	\bibitem[{Nguyen et~al.(2014)Nguyen, Nguyen, Roos, and
		Rothe}]{nguyen2014computational}
	Nguyen NT, Nguyen TT, Roos M, Rothe J (2014) Computational complexity and
	approximability of social welfare optimization in multiagent resource
	allocation. Autonomous agents and multi-agent systems 28(2):256--289
	
	\bibitem[{Pazner and Schmeidler(1978)}]{pazner1978egalitarian}
	Pazner EA, Schmeidler D (1978) Egalitarian equivalent allocations: A new
	concept of economic equity. The Quarterly Journal of Economics 92(4):671--687
	
	\bibitem[{Peters and Lackner(2017)}]{SPcircle}
	Peters D, Lackner M (2017) Preferences single-peaked on a circle. In:
	Proceedings of the Thirty-First AAAI Conference on Artificial Intelligence,
	AAAI Press, p 649–655
	
	\bibitem[{Puppe(2018)}]{Puppe}
	Puppe C (2018) The single-peaked domain revisited: A simple global
	characterization. Journal of Economic Theory 176(C):55--80
	
	\bibitem[{Rawls(1971)}]{rawls1971theory}
	Rawls J (1971) A theory of justice. {H}arvard {U}niversity {P}ress
	
	\bibitem[{Rosenschein and Zlotkin(1994)}]{rosenschein1994rules}
	Rosenschein JS, Zlotkin G (1994) Rules of encounter: designing conventions for
	automated negotiation among computers. MIT press
	
	\bibitem[{Roth(1982)}]{roth1982incentive}
	Roth AE (1982) Incentive compatibility in a market with indivisible goods.
	Economics letters 9(2):127--132
	
	\bibitem[{Roth and Vate(1990)}]{roth1990random}
	Roth AE, Vate JHV (1990) Random paths to stability in two-sided matching.
	Econometrica pp 1475--1480
	
	\bibitem[{Roth et~al.(2005)Roth, S{\"o}nmez, and {\"U}nver}]{roth2005pairwise}
	Roth AE, S{\"o}nmez T, {\"U}nver MU (2005) Pairwise kidney exchange. Journal of
	Economic theory 125(2):151--188
	
	\bibitem[{Saffidine and Wilczynski(2018)}]{saffidine2018constrained}
	Saffidine A, Wilczynski A (2018) Constrained swap dynamics over a social
	network in distributed resource reallocation. In: Proceedings of the 11th
	International Symposium on Algorithmic Game Theory (SAGT), pp 213--225
	
	\bibitem[{Sandholm(1998)}]{sandholm1998contract}
	Sandholm T (1998) Contract types for satisficing task allocation. In:
	Proceedings of the Spring Symposium of AAAI Conference on Artificial
	Intelligence (AAAI), pp 23--25
	
	\bibitem[{Sen(1966)}]{sen1966possibility}
	Sen AK (1966) A possibility theorem on majority decisions. Econometrica pp
	491--499
	
	\bibitem[{Shapley and Scarf(1974)}]{shapley1974cores}
	Shapley L, Scarf H (1974) On cores and indivisibility. Journal of mathematical
	economics 1(1):23--37
	
	\bibitem[{Spector(2000)}]{spector2000rational}
	Spector D (2000) Rational debate and one-dimensional conflict. The Quarterly
	Journal of Economics 115(1):181--200
	
	\bibitem[{Sprumont(1991)}]{sprumont1991division}
	Sprumont Y (1991) The division problem with single-peaked preferences: a
	characterization of the uniform allocation rule. Econometrica pp 509--519
	
	\bibitem[{Sprumont(1996)}]{sprumont1996axiomatizing}
	Sprumont Y (1996) Axiomatizing ordinal welfare egalitarianism when preferences
	may vary. Journal of Economic Theory 68(1):77--110
	
	\bibitem[{Thibaut and Walker(1975)}]{thibaut1975procedural}
	Thibaut JW, Walker L (1975) Procedural justice: A psychological analysis. L.
	Erlbaum Associates
	
	\bibitem[{Thomson(1983)}]{thomson1983problems}
	Thomson W (1983) Problems of fair division and the egalitarian solution.
	Journal of Economic Theory 31(2):211--226
	
	\bibitem[{Thomson(2016)}]{thomson2016introduction}
	Thomson W (2016) Introduction to the theory of fair allocation. In: Brandt F,
	Conitzer V, Endriss U, Lang J, Procaccia AD (eds) Handbook of Computational
	Social Choice, Cambridge University Press, p 261–283
	
	\bibitem[{Walsh(2015)}]{walsh2015generating}
	Walsh T (2015) Generating single peaked votes. arXiv:1503.02766
	
	\bibitem[{Walsh(2020)}]{Walsh20}
	Walsh T (2020) Fair division: The computer scientist's perspective. In:
	Bessiere C (ed) Proceedings of the Twenty-Ninth International Joint
	Conference on Artificial Intelligence, {IJCAI} 2020, pp 4966--4972
	
\end{thebibliography}

\end{document}